\theoremstyle{plain}
\newtheorem{theorem}{Theorem}[section]
\newtheorem{proposition}[theorem]{Proposition}
\newtheorem{lemma}[theorem]{Lemma}
\theoremstyle{definition}
\theoremstyle{remark}
\def\eqref#1{equation~\ref{#1}}
\def\floor#1{\lfloor #1 \rfloor}
\def\1{\bm{1}}
\DeclareMathAlphabet{\mathsfit}{\encodingdefault}{\sfdefault}{m}{sl}
\SetMathAlphabet{\mathsfit}{bold}{\encodingdefault}{\sfdefault}{bx}{n}
\def\gA{{\mathcal{A}}}
\def\gE{{\mathcal{E}}}
\def\gF{{\mathcal{F}}}
\def\gG{{\mathcal{G}}}
\def\gL{{\mathcal{L}}}
\def\gN{{\mathcal{N}}}
\def\gP{{\mathcal{P}}}
\def\gR{{\mathcal{R}}}
\def\gX{{\mathcal{X}}}
\def\gY{{\mathcal{Y}}}
\def\gZ{{\mathcal{Z}}}
\def\sR{{\mathbb{R}}}
\newcommand{\E}{\mathbb{E}}
\DeclareMathOperator*{\argmax}{arg\,max}
\renewcommand{\epsilon}{\varepsilon}
\DeclareMathOperator*{\diag}{diag}
\def\ie{\textit{i.e.,~}}
\def\eg{\textit{e.g.,~}}
\begin{document}

\twocolumn[
\icmltitle{On the Generalization of Multi-modal Contrastive Learning}

\icmlsetsymbol{equal}{*}

\begin{icmlauthorlist}
\icmlauthor{Qi Zhang}{equal,1}
\icmlauthor{Yifei Wang}{equal,2}
\icmlauthor{Yisen Wang}{1,3}
\end{icmlauthorlist}

\icmlaffiliation{1}{National Key Lab of General Artificial Intelligence, School of Intelligence Science and Technology, Peking University}
\icmlaffiliation{2}{School of Mathematical Sciences, Peking University}
\icmlaffiliation{3}{Institute for Artificial Intelligence, Peking University}

\icmlcorrespondingauthor{Yisen Wang}{yisen.wang@pku.edu.cn}

\icmlkeywords{Machine Learning, ICML}

\vskip 0.3in
]

\printAffiliationsAndNotice{\icmlEqualContribution}

\begin{abstract}
Multi-modal contrastive learning (MMCL) has recently garnered considerable interest due to its superior performance in visual tasks, achieved by embedding multi-modal data, such as visual-language pairs. However, there still lack theoretical understandings of how MMCL extracts useful visual representation from multi-modal pairs, and particularly, how MMCL outperforms previous approaches like self-supervised contrastive learning (SSCL). In this paper, by drawing an intrinsic connection between MMCL and asymmetric matrix factorization, we establish the first generalization guarantees of MMCL for visual downstream tasks. Based on this framework, we further unify MMCL and SSCL by showing that MMCL implicitly performs SSCL with (pseudo) positive pairs induced by text pairs. Through this unified perspective, we characterize the advantage of MMCL by showing that text pairs induce more semantically consistent and diverse positive pairs, which, according to our analysis, provably benefit downstream generalization.
Inspired by this finding, we propose CLIP-guided resampling methods to significantly improve the downstream performance of SSCL on ImageNet by leveraging multi-modal information. Code is available at \url{https://github.com/PKU-ML/CLIP-Help-SimCLR}.
\end{abstract}

\section{Introduction}
Recently, multi-modal contrastive learning (MMCL), including CLIP \citep{clip} and its variants \citep{declip,mu2022slip,yao2021filip}, has achieved impressive performance for visual representation learning, and transfer well to various downstream tasks like zero-shot and few-shot image classification. The core idea of MMCL is rather simple, which aligns the samples of the same image-text pairs together while pushing away other unrelated samples in the latent feature space. 
However, it remains not fully clear to us why matching multi-modal pairs would benefit visual representation learning, and what are the key factors that affect its downstream performance.

Meanwhile, another popular scenario for contrastive learning is self-supervised learning, which also obtains competitive performance recently \cite{simclr,moco,wang2021residual}. Nevertheless, recent MMCL methods (like CLIP) have shown significant advantages over its self-supervised contrastive learning (SSCL) counterparts like SimCLR \cite{simclr}. Existing theories of SSCL \citep{arora,haochen,wang2020understanding} only establish the optimality of self-supervised representations on downstream tasks, and fail to characterize why MMCL could outperform SSCL. Another major obstacle is the generation process of data pairs. In particular, positive pairs in SSCL are visual-only samples generated by random data augmentations of the raw image; Instead, the positive pairs in MMCL are multi-modal (\eg visual-language) pairs directly provided by the dataset. Since existing SSCL theories rely crucially on the assumption that data augmentations produce overlap between visual samples \cite{wang2022chaos,saunshi2022understanding}, they cannot be directly applied to MMCL that relies on multi-modal data pairs.

In this paper, we propose the first theoretical analysis on the generalization ability of MMCL. To achieve this, we establish an equivalence between the MMCL objective and the asymmetric matrix factorization (AMF) of the multi-modal co-occurrence matrix. Built upon this connection, we characterize the ideal pretrained representations of MMCL and its generalization bounds on visual and language downstream tasks,
where the bounds are influenced by the properties of the multi-modal co-occurrence matrix, for example, its singular value.

The established theoretical framework also allows us to characterize the difference between MMCL and SSCL under a unified perspective. To be specific, we first formally unify MMCL and SSL under the framework of uni-modal similarity graphs, where language pairs in MMCL can be regarded as a special kind of data augmentation for generating positive visual pairs. Based on this perspective, we compare MMCL and SSCL on real-world data and show that text-induced positive pairs have better semantic consistency and diversity than augmentation-based ones in SSCL, which explains the superiority of MMCL on downstream tasks. Besides the empirical comparisons, we theoretically analyze this difference by modeling the data generation process with the hierarchical random graph \cite{clauset2008hierarchical}. Based on this understanding, we further leverage multi-modal information in CLIP to guide the self-supervised visual learning with SimCLR on ImageNet and achieve significant improvements, which validates our understanding of the superiority of multi-modal positive pairs.

We summarize our contributions as follows:
\begin{itemize}
    \item We establish the first generalization theoretical guarantee for multi-modal contrastive learning (MMCL). We provide a new perspective of the multi-modal contrastive loss by connecting it with an asymmetric matrix decomposition objective.  
    \item We provide a unified perspective for understanding the connections and differences between multi-modal and self-supervised contrastive learning. Based on this perspective, we examine their differences on real-world data, and find that multi-modal information induces better positive visual pairs than self-supervision (with better semantic consistency and diversity), which explains the superiority of MMCL. 
    \item As a verification of our understanding above, we further investigate a new scenario where we leverage multi-modal information in pretrained models (like CLIP) to guide the sampling process in self-supervised learning like SimCLR. We propose four different techniques and they both bring improvements (as much as 6.2\%) on ImageNet.
\end{itemize}

\section{Related Work}
\textbf{Multi-modal Pretraining Applications.}
Traditional single-stream models \citep{lu2019vilbert,li2019visualbert} have been widely discussed and shown the impressive performance in various multi-modal tasks. However, as they do not have independent encoders for different modals, the transferability of these frameworks is usually limited. On contrast, multi-modal contrastive learning paradigms represented by CLIP \citep{clip} have recently obtained the promising performance in multi-modal downstream tasks including zero-shot learning, finetuning and linear-probing. Inspired by CLIP, various variants are proposed to improve the efficiency and performance of multi-modal pretraining. SLIP \citep{mu2022slip} and DeCLIP \citep{declip} combine the self-supervised and multi-modal contrastive learning to accelerate the training process. FILIP \citep{yao2021filip} propose fine-grained multi-modal contrastive objective to make the encoder focus more on the local features.

\textbf{Theory of Contrasative Learning.} Motivated by the empirical success of the contrastive objective, many researchers try to theoretically analyze how it works. \citet{wang2020understanding} understand the contrastive loss from two terms in it: the alignment of the positive samples and the uniformity of the negative samples. \citet{infomax} analyze the objective from the mutual information theory. \citet{arora} establish the theoretical guarantee between the pretraining contrastive loss and the downstream classification performance. 
\citet{haochen} revisit the contrastive objective from a spectral graph perspective, which explains the relationship between the augmented samples and the downstream performance of contrastive learning. \citet{wang2022chaos,wang2023message} provide a theoretical understanding for contrastive learning from the perspective of augmentation overlap and message passing respectively. As these prior theoretical works mainly focus on the single-modal contrastive learning, the theoretical analysis on the multi-modal contrastive learning is still quite limited. In this work, we theoretically analyze the relationship between the design of the multi-modal contrastive  paradigms and its generalization ability on downstream tasks.

\textbf{Theory of Multi-modal Learning.}
For the theoretical analysis of multi-modal learning, there are few related works. 
\citet{sun2020tcgm} propose a information-theoretic framework and prove that their method can learn ground-truth Bayesian posterior classifier for each modality and the
Bayesian posterior aggregator for all modalities. \citet{huang2021makes} proves that the multi-modal models can learn better representations than single-modal models in certain conditions. However, both of their analysis do not focus on the multi-modal \emph{contrastive} paradigm and can not explain why the contrastive methods can achieve such an impressive performance.

\section{Generalization Theory of Multi-Modal Contrastive Learning}
\label{sec:downstream guarantees}
\subsection{Mathematical Formulation}
We start by introducing the basic mathematical formulation for multi-modal contrastive learning. Without loss of generality, taking CLIP \cite{clip} for an example, we have the paired data $(x_v,x_l)$ from the visual domain ($x_v$ denotes an image) and the language domain ($x_l$ denotes a corresponding text description of the image). Each $x_v$ or $x_l$ belongs to one of $r$ classes. We use 
 $\mathcal{X}_V$ to denote the set of all visual data with distribution $\mathcal{P}_V$, and $\mathcal{X}_L$ to denote the set of all language data with distribution $\mathcal{P}_L$. Their joint multi-modal distribution is $\gP_M$. For ease of exposition, we assume $\gX_V,\gX_L$ to be finite but exponentially large sets\footnote{With some non-essential nuances as in \citet{haochen}, our analysis can also be extended to the infinite data setting.}, and denote
 $N_V=\vert \mathcal{X} _V \vert$ and $N_L=\vert \mathcal{X} _L \vert$. 
 The goal of multi-modal contrastive learning is to obtain a joint embedding of the visual data $\gX_V$ and language data $\gX_L$ in the $k$-dimensional latent space $\gZ\in\gR^k$ by learning a visual encoder $f_V:\gX_V\to\gZ$ and a language encoder $f_L:\gX_L\to\gZ$, such that semantically similar samples (either image-image, text-text or image-text pairs) have close representations, and different samples are apart. A recent work \cite{tschannen2022image} also explores a Siamese network, \ie $f_V=f_L$. Here we consider the general case with two different encoders. 
 
For multi-modal positive and negative pairs, we define an image-text pair drawn from the paired visual-language data, \ie $(x_v,x_l)\sim\gP_M$, as positive pairs, and draw independent samples from each domain, $x^-_v\sim\gP_V,x^-_l\sim\gP_L$, and treat $(x_v,x^-_l)$, $(x^-_v,x_l)$ and $(x^-_v,x^-_l)$ as negative pairs, because samples in these pairs are independent of each other.

Given positive and negative pairs $(x_v,x_l,x^-_v,x^-_l)$, one popular learning objective is the symmetric cross entropy (SCE) loss (adopted in CLIP) calculated over similarity scores:
\begin{equation}
\begin{aligned}
    \mathcal{L}_{\rm SCE}(f_V,f_L)
    =&-\mathbb{E}_{x_v,x_l}\log\frac{\exp\left(f_V(x_v)^\top f_L(x_l)\right)}{\E_{x_l^-}\exp(f_V(x_v)^\top f_L(x_l^-))}\\
    &-\mathbb{E}_{x_v,x_l}\log\frac{\exp\left(f_V(x_v)^\top f_L(x_l)\right)}{\E_{x_v^-}\exp(f_V(x_v^-)^\top f_L(x_l))}.    
\end{aligned}
\label{eqn:infonce loss}
\end{equation}
This objective can be seen as an extension of the popular InfoNCE loss \cite{InfoNCE} to the multi-modal scenario \cite{zhang2020contrastive}. During the learning process, positive pairs $(x_v,x_l)$ are pulled together in the latent space while negative pairs $(x_v,x^-_l)$ and $(x^-_v,x_l)$ are pushed apart. Following the same spirit, we consider a similar multi-modal spectral loss for the ease of theoretical analysis,
\begin{equation}
\begin{aligned}
&\mathcal{L}_{\rm SCL}(f_V,f_L) \\
= &-2\mathbb{E}_{x_v,x_l}f_V(x_v)^\top f_L(x_l)+
\mathbb{E}_{x^-_v,x^-_l}(f_V(x_v^-)^\top f_L(x_l^-))^2.
\end{aligned}
\label{eqn:spectral loss}
\end{equation}
Comparing Eq.~\ref{eqn:infonce loss} and Eq.~\ref{eqn:spectral loss}, we can easily see that the two objectives have the same loss for positive pairs, and only differ at the specific loss function used for pushing negative pairs apart ($\operatorname{logsumexp}$ loss in Eq.~\ref{eqn:infonce loss} v.s. $\ell_2$ loss in Eq.~\ref{eqn:spectral loss}). The multi-modal spectral loss can be regarded as an extension of the visual spectral contrastive loss that achieves comparable performance to the InfoNCE loss in visual tasks \cite{haochen}. Nevertheless, their analysis can only be applied to self-supervised contrastive learning where positive and negative pairs come from the same domain.

After pretraining, we evaluate the learned representations by applying them to downstream tasks. Taking the visual linear probing task as an example, 
we train a linear classifier to predict class labels $y\in\gY$ from the output features of $f_V$ by $g_{f,B_V}(x_v) = \argmax_{i\in [r]}(f_V(x_v)^\top B_V)_i$, where $B_V\in\sR^{k\times r}$ denotes the weight matrix. The linear probing error of $f_V$ is defined as the error of the optimal linear classifier on the encoded features, \ie
\begin{equation}
    \mathcal{E}(f_V) = \min_{B_V} \mathbb{E}_{x_v \sim \mathcal{P}_V} \mathbbm{1} [g_{f,B_V}(x_v) \neq y(x_v)],
\end{equation}
where $y(x_v)$ denotes the label of $x_v$. Likewise, we can define the linear probing error $\gE(f_L)$ for the text classification.

\subsection{An Asymmetric Matrix Factorization View of Multi-modal Contrastive Learning}

With its samplewise pretraining objective (Eqs.~\ref{eqn:infonce loss} \& \ref{eqn:spectral loss}), multi-modal contrastive learning (MMCL) is usually understood as an instance-level feature matching task between visual and language domains \cite{clip}. However, little is known about {the overall distribution} of the learned features, which hinders us from understanding how its instance-level pretraining benefits downstream applications. In this section, with a reformulation of the MMCL objective, we show that MMCL is essentially equivalent to the asymmetric matrix factorization (AMF) of the joint data distribution $\gP_M(x_v,x_l)$. AMF is an important class of methods in classical machine learning with inherent connections to PCA, K-means, and spectral clustering \cite{ding2005equivalence}, and is widely adopted in unsupervised learning scenarios like Latent Semantic Analysis \cite{deerwester1990indexing} and word embedding \cite{pennington2014glove}. Generally speaking, AMF can extract low-frequency components that underline the common structure of the joint distribution, which is helpful for MMCL analysis.

We start by formulating the joint distribution $\gP_M(x_v,x_l)$ as a \emph{co-occurrence matrix} $P_M\in\mathbb{R}^{N_V\times N_L}$ between all visual-language data pairs, where
\begin{equation}
 \left(P_M\right)_{x_v,x_l}=\gP_M(x_v,x_l)\geq0,\ \forall\ x_v\in[N_V],x_l\in[N_L].
\end{equation}
We can see that $P_M$ is a non-negative asymmetric matrix that can be exponentially large. A canonical assumption of representation learning is that high-dimensional data (like images and text) lie in a low-dimensional manifold. Then, we consider the following low-rank matrix factorization for the \textit{normalized co-occurrence matrix} $\tilde{P}_M$:
\begin{equation}
    \gL_{\rm AMF}(F_V,F_L)=\|\tilde{P}_M-F_VF_L^\top\|^2,
    \label{eqn:amf-objective}
\end{equation}
where $F_V\in\gR^{N_V\times k},F_L\in\gR^{N_L\times k}$ are factorized low-rank components $(k\ll\min(N_V,N_L))$ of the visual and language domains, respectively. To obtain the normalized co-occurrence matrix $\tilde{P}_M$, we adopt two-side normalization
\begin{equation}
 (\tilde{P}_M)_{x_{v},x_l}=\frac{\gP_M({x_v,x_l})}{\sqrt{\gP_V({x_v})\gP_L({x_l})}},   
 \label{eqn:normalization}
\end{equation}
where $\gP_V(x_v)=\sum_{x_l}\gP_M(x_v,x_l)$ denotes the marginal probability of $x_v$, and $\gP_L(x_l)=\sum_{x_v}\gP_M(x_v,x_l)$ denotes the marginal probability of $x_l$. 
Based on this formulation, we are ready to establish the key result of this paper.

\begin{theorem}[Equivalence]
Let the $x_v$-row of $F_V$ and the $x_l$-row of $F_L$ represent the corresponding encoded features of these samples in the following form,
\begin{subequations}
\begin{align}
(F_V)_{x_v} &= \sqrt{\mathcal{P}_V(x_v)}f_V(x_v)^\top,\\
(F_L)_{x_l} &= \sqrt{\mathcal{P}_L(x_l)}f_L(x_l)^\top.    
\end{align}
\label{eqn:U-V-formulation}
\end{subequations}
Then low-rank asymmetric matrix factorization loss (Eq.~\ref{eqn:amf-objective}) is equivalent to the multi-modal contrastive loss (Eq.~\ref{eqn:spectral loss}) up to a constant,
\begin{align}
\mathcal{L}_{\rm AMF}(F_V,F_L) = \mathcal{L}_{\rm SCL}(f_V,f_L) +const.
\end{align}
\label{thm:spectral=asymetric}
\end{theorem}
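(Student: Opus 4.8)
The plan is to expand the squared Frobenius norm defining $\mathcal{L}_{\rm AMF}$ into three pieces and match two of them, entry by entry, against the two terms of $\mathcal{L}_{\rm SCL}$, while absorbing the remaining piece into the constant. First I would write
\[
\mathcal{L}_{\rm AMF}(F_V,F_L)=\|\tilde{P}_M\|^2-2\langle\tilde{P}_M,\,F_VF_L^\top\rangle+\|F_VF_L^\top\|^2,
\]
where $\langle\cdot,\cdot\rangle$ is the entrywise inner product. The term $\|\tilde{P}_M\|^2$ does not depend on $f_V,f_L$, so it becomes the additive constant, i.e.\ $const=\|\tilde{P}_M\|^2$.

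Next I would substitute the parameterization in Eq.~\ref{eqn:U-V-formulation} to obtain the generic entry $(F_VF_L^\top)_{x_v,x_l}=\sqrt{\mathcal{P}_V(x_v)\mathcal{P}_L(x_l)}\,f_V(x_v)^\top f_L(x_l)$, and then handle the two remaining pieces separately. For the cross term, I would multiply this entry by $(\tilde{P}_M)_{x_v,x_l}=\mathcal{P}_M(x_v,x_l)/\sqrt{\mathcal{P}_V(x_v)\mathcal{P}_L(x_l)}$ from Eq.~\ref{eqn:normalization}; the two square-root factors cancel, leaving $\mathcal{P}_M(x_v,x_l)\,f_V(x_v)^\top f_L(x_l)$, so that summing over all $(x_v,x_l)$ identifies $-2\langle\tilde{P}_M,F_VF_L^\top\rangle$ with the positive-pair term $-2\,\mathbb{E}_{x_v,x_l}f_V(x_v)^\top f_L(x_l)$. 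For the quadratic term, I would square the generic entry to get $\mathcal{P}_V(x_v)\mathcal{P}_L(x_l)\,(f_V(x_v)^\top f_L(x_l))^2$, and since negatives are drawn independently as $x_v^-\sim\mathcal{P}_V$, $x_l^-\sim\mathcal{P}_L$, summing over all pairs gives exactly $\mathbb{E}_{x_v^-,x_l^-}(f_V(x_v^-)^\top f_L(x_l^-))^2$. Combining the three pieces yields $\mathcal{L}_{\rm AMF}=\|\tilde{P}_M\|^2+\mathcal{L}_{\rm SCL}$.

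I do not anticipate a real obstacle: this is essentially a bookkeeping identity. The one subtlety worth stating carefully is the asymmetric role of the $\sqrt{\mathcal{P}}$ factors — in the cross term they cancel the normalization of $\tilde{P}_M$, so the resulting expectation is over the joint law $\mathcal{P}_M$ of positive pairs, whereas in the quadratic term they persist and assemble into the product law $\mathcal{P}_V\otimes\mathcal{P}_L$ of the negatives. The mild technical point to flag is the standing assumption that $\mathcal{X}_V,\mathcal{X}_L$ are finite, which guarantees all sums and $\|\tilde{P}_M\|^2$ are well-defined; the infinite-data extension mentioned in the footnote would only require replacing sums by integrals and checking integrability, with the algebra unchanged.
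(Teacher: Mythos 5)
Your proof is correct and follows essentially the same route as the paper's: both expand the squared Frobenius norm into the constant $\|\tilde{P}_M\|^2$, a cross term that becomes $-2\,\mathbb{E}_{x_v,x_l}f_V(x_v)^\top f_L(x_l)$ after the $\sqrt{\mathcal{P}}$ normalizations cancel, and a quadratic term that becomes the expectation over the product law $\mathcal{P}_V\otimes\mathcal{P}_L$. The only cosmetic difference is that you organize the expansion via the Frobenius inner product identity while the paper expands the squared difference entry by entry, but the algebra and the identifications of each piece are identical.
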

\begin{proof}
Taking the definition of $F_V$ and $F_L$ in Eq.~\ref{eqn:U-V-formulation} into the decomposition loss $\gL_{\rm AMF}(F_V,F_L)$, and combing with the definition of $\tilde{P}_M$ in Eq.~\ref{eqn:normalization}, we have
\begin{align*}
&\gL_{\rm AMF}(F_V,F_L) \\
=& \Vert \tilde{P}_M - F_VF_L^\top \Vert ^2\\
        =&\sum\limits_{x_v,x_l}\bigg( \frac{\gP_M(x_v,x_l)}{\sqrt{\gP_V(x_v)\gP_L(x_l)}}\\
        &~~~~~~~~~~~-\sqrt{\gP_V(x_v)}f_V(x_v)^\top \sqrt{\gP_L(x_l)}f_L(x_l) \bigg)^2\\
        =&\sum\limits_{x_v,x_l}\bigg(\frac{\gP_M(x_v,x_l)^2}{\gP_V(x_v)\gP_L(x_l)} -2\gP_M(x_v,x_l)f_V(x_v)^\top f_L(x_L)\\
        &~~~~~~~~~~~+\gP_V(x_v)\gP_L(x_l)\left(f_V(x_v)^\top f_L(x_L)\right)^2\bigg)\\
        =&\underbrace{\sum\limits_{x_v,x_l}\bigg(\frac{\gP_M(x_v,x_l)^2}{\gP_V(x_v)\gP_L(x_l)} \bigg)}_{const} 
        -2\E_{x_v,x_l} f_V(x_v)^\top f_L(x_l) \\
        &~~~~~~~~~~~+ \E_{x_v^-, x_l^-}\left(f_V(x_v^-)^\top f_L(x_l^-) \right)^2\\
        =& \gL_{\rm SCL}(f_V,f_L) + const,
\end{align*}
which completes the proof.
\end{proof}
Theorem \ref{thm:spectral=asymetric} reveals a crucial fact that multi-modal contrastive learning essentially learns the low-rank factorization of the co-occurrence matrix. Meanwhile, we notice that the original factorization loss is actually intractable to directly solve because of the exponentially large size of the co-occurrence matrix $P_M$, while multi-modal contrastive learning avoids this problem by transforming it into a tractable and scalable objective that simply requires samples from the joint probability $\gP_M$. But theoretically, this equivalence allows us to characterize the overall distribution of multi-modal contrastive learning, and provides guarantees on downstream tasks for its ideal representations in the following part.

\subsection{Characterizing Ideal Representations of Multi-modal Contrastive Learning}
\label{sec:ideal-property}
In multi-modal contrastive learning (MMCL) like CLIP \cite{clip}, a common pipeline is to apply the pretrained representations to downstream visual tasks like image classification. Therefore, in order to characterize the pretraining and downstream behaviors of MMCL, it matters for us to understand the properties of the optimally pretrained representations, and how they generalize to downstream tasks. 

\textbf{Ideal Representations.} First, we characterize the \textit{general solution} to the multi-modal pretraining loss, under the ideal assumption that the neural networks are expressive enough.
\begin{theorem}
Let $\tilde{P}_M=U\Sigma V^\top$ is the singular value decomposition (SVD) of the normalized co-occurrence matrix $\tilde{P}_M$ (Eq.~\ref{eqn:normalization}), where $U\in\sR^{N_V\times r}, V\in\sR^{r\times N_L}$ are unitary matrices, and $\Sigma=\diag(\sigma_1,\dots,\sigma_r)$ contains descending singular values $\sigma_1\geq\dots\sigma_r\geq0,,r=\min(N_V,N_L)$. 
Assume the neural networks are expressive enough for any features. The multi-modal contrastive loss (Eq.~\ref{eqn:spectral loss}) attains its optimum when $\forall\ x_v\in\gX_V,x_l\in\gX_L$,
\begin{subequations}
\begin{align}
f^*_V(x_v)&=\frac{1}{\sqrt{\gP_V(x_v)}}\left(U^k_{x_v}DR\right)^\top, \label{eqn:optimal-V}\\
f^*_L(x_l)&=\frac{1}{\sqrt{\gP_L(x_l)}}\left(V^k_{x_l}\diag({\sigma_1},\dots,{\sigma_k})D^{-1}R\right)^\top, \label{eqn:optimal-L}
\end{align}
\label{eqn:optimal-encoders}
\end{subequations}
where $U_{x}$ takes the $x$-th row of $U$, and $U^k,V^k$ denote the submatrices containing the first $k$ columns of $U,V$, respectively;
$D\in\gR^{k\times k}$ is an arbitrary invertible diagonal matrix; and $R\in\sR^{k\times k}$ is an arbitrary unitary matrix. 
\label{thm:optimal-representation}
\end{theorem}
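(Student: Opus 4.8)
Theorem~\ref{thm:spectral=asymetric} already does the heavy lifting: under the correspondence $(F_V)_{x_v}=\sqrt{\gP_V(x_v)}\,f_V(x_v)^\top$ and $(F_L)_{x_l}=\sqrt{\gP_L(x_l)}\,f_L(x_l)^\top$, minimizing $\gL_{\rm SCL}$ over expressive encoders is equivalent to minimizing $\gL_{\rm AMF}(F_V,F_L)=\|\tilde P_M-F_VF_L^\top\|^2$ over $F_V\in\gR^{N_V\times k}$ and $F_L\in\gR^{N_L\times k}$. Since the networks realize arbitrary features, the product $G:=F_VF_L^\top$ ranges over \emph{all} matrices of rank at most $k$; so the plan is (1) identify the best rank-$k$ approximant of $\tilde P_M$ via the truncated SVD, (2) describe which pairs $(F_V,F_L)$ produce it, and (3) translate back through Eq.~\ref{eqn:U-V-formulation}.

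\textbf{Main steps.} For (1), the Eckart--Young--Mirsky theorem gives $\min_{\rank(G)\le k}\|\tilde P_M-G\|^2=\sum_{i>k}\sigma_i^2$, attained at $G^\star=U^k\Sigma_k(V^k)^\top$, where $U^k,V^k$ collect the first $k$ left/right singular vectors and $\Sigma_k=\diag(\sigma_1,\dots,\sigma_k)$; one fixes a particular SVD to make this choice definite when there are ties at the $k$-th singular value. Hence the optimal encoders are precisely those with $F_VF_L^\top=G^\star$. For (2), assuming $\sigma_k>0$ so that $\rank(G^\star)=k$, the column space of $F_V$ must contain that of $G^\star$, namely $\operatorname{span}(U^k)$, and being at most $k$-dimensional it must equal it; thus $F_V=U^kM$ for some invertible $M\in\gR^{k\times k}$, which in turn forces $F_L^\top=M^{-1}\Sigma_k(V^k)^\top$, i.e.\ $F_L=V^k\Sigma_kM^{-\top}$ with $M^{-\top}:=(M^{-1})^\top$. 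For (3), substituting into Eq.~\ref{eqn:U-V-formulation} yields $f_V^*(x_v)^\top=\tfrac{1}{\sqrt{\gP_V(x_v)}}U^k_{x_v}M$ and $f_L^*(x_l)^\top=\tfrac{1}{\sqrt{\gP_L(x_l)}}V^k_{x_l}\Sigma_kM^{-\top}$; writing $M=DR$ with $D$ invertible diagonal and $R$ orthogonal, so that $M^{-\top}=D^{-1}R$, recovers Eq.~\ref{eqn:optimal-V}--\ref{eqn:optimal-L} exactly. Conversely, every such $DR$ is invertible and a one-line computation confirms $F_VF_L^\top=U^k\Sigma_k(V^k)^\top=G^\star$, so each displayed pair is genuinely a minimizer.

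\textbf{Expected obstacle.} Steps (1) and (3) are bookkeeping; the care goes into the degeneracies. If singular values repeat across index $k$ the minimizer $G^\star$ is not unique, which is exactly why the statement pins down one SVD. If $\rank(\tilde P_M)<k$ the argument $F_V=U^kM$ breaks and the optimal set is strictly larger, so an implicit $\sigma_k>0$ (equivalently $\rank(\tilde P_M)\ge k$) is needed. Finally, the fully general ambiguity in factoring $G^\star$ is $F_V=U^kM$ for \emph{arbitrary} invertible $M$; the parametrization $M=DR$ captures the scaling ($D$) and rotation ($R$) degrees of freedom that the theorem wishes to highlight and suffices for the ``attains its optimum when'' (sufficiency) reading, while upgrading it to an exhaustive ``if and only if'' is the one spot where one must be precise about the class of admissible $M$.
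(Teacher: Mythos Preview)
Your proposal is correct and follows essentially the same route as the paper: invoke Theorem~\ref{thm:spectral=asymetric} to reduce to $\gL_{\rm AMF}$, apply Eckart--Young to pin down $F_VF_L^\top=U^k\Sigma_k(V^k)^\top$, then factor and undo Eq.~\ref{eqn:U-V-formulation}. You are in fact more careful than the paper's own proof about the degeneracies (ties at $\sigma_k$, the implicit $\sigma_k>0$ assumption, and the fact that $M=DR$ does not exhaust all invertible $M$ so the displayed family is sufficient but not exhaustive); these caveats are accurate and worth keeping.
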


Theorem \ref{thm:optimal-representation} shows that the ideal representations of MMCL are largely determined by the $k$ leading eigenvectors, up to some affine transformations (scaling $D$ and rotation $R$). Although the optimal solution is not unique, when we apply this representation to the linear probing task, the linear classifier can absorb the differences in affine transformations and yield the same classification error for different variants at the optimum. 
Built upon these optimal representations, we are ready to establish formal guarantees for the generalization of multi-modal contrastive learning on the downstream linear probing tasks in both the visual and language domains.

\begin{theorem}
Given a specific joint data distribution $\gP_M$, we define the labeling error $\alpha$ as the average label agreement among the visual-language  positive pairs $(x_v,x_l)\sim\gP_M$, \ie 
\begin{equation}
    \alpha=\mathbb{E}_{x_v,x_l} \mathbbm{1}[{y}(x_v)\neq y(x_l)],
\end{equation}
where ${y}(\cdot)$ returns the ground-truth label of the operand. 
Denote the empirical estimate of the visual and text encoders from $n$ pretraining examples as $\hat{f}^*_V,\hat{f}^*_L$, respectively. With probability $1-\delta$, the visual linear probing  error $\mathcal{E}(\hat{f}_V^*)$ and text linear probing error $\mathcal{E}(\hat{f}_L^*)$ can be upper-bounded by
\begin{equation}
\begin{aligned}
\big\{\mathcal{E}(\hat{f}_V^*),&\mathcal{E}(\hat{f}_L^*)
\big\} \lesssim \frac{\alpha}{1-\sigma_{k+1}^2}\\
&+ \underbrace{\frac{ck}{\Delta^2_\sigma}\left(\widehat{\gR}_{n/3}(\gF) + \sqrt{\frac{\log 2/\delta}{2n/3}} + \delta \right)}_\text{finite-sample generalization terms}    
\end{aligned}
    \label{eqn:generalization-bound}
\end{equation}
where $\lesssim$ omits some constant terms, $\sigma_{k+1}$ (c.f. Theorem \ref{thm:optimal-representation}) is the $(k+1)$-th largest singular value of the normalized co-occurrence matrix $\tilde{P}_M$. 
In the finite-sample generalization terms, $\hat{\gR}_{n/3}(\gF)$ denotes a Rademacher complexity of the model class $\gF$ with $n/3$ samples, $k$ is the representation dimension, $\Delta_\sigma = \sigma^2_{\floor{3k/4}}-\sigma^2_{k}$, and $c \lesssim (k\kappa + 2k\kappa^2 + 1)^2$ with $\kappa$ upper bounding $\Vert f_V(x) \Vert _\infty$ and $\Vert f_L(x) \Vert _\infty$.  
\label{thm:downstream performance}
\end{theorem}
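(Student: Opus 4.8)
The plan is to decompose the downstream error into a population (approximation) term and a finite-sample (estimation) term, following the spectral-graph template of \citet{haochen} but adapted to the asymmetric multi-modal setting via Theorems~\ref{thm:spectral=asymetric} and~\ref{thm:optimal-representation}. First I would work with the exact optimal representations $f_V^*, f_L^*$ from Theorem~\ref{thm:optimal-representation}. The key observation is that, up to the affine transformations $D$ and $R$ (which a linear probe absorbs), the optimal visual features are the $k$ leading left-singular directions of $\tilde{P}_M$ rescaled by $1/\sqrt{\gP_V(x_v)}$, and similarly for text. One then shows that such spectral features are \emph{almost linearly separable} with respect to the ground-truth labels: build the ``mean classifier'' $B_V$ whose $i$-th column is the (normalized) conditional mean of $f_V^*(x_v)$ over class $i$, and bound its error. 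The error of this classifier is controlled by how well label-indicator vectors are approximated inside the top-$k$ eigenspace of $\tilde{P}_M$; a standard argument shows the residual is governed by $\sigma_{k+1}$, giving the $1/(1-\sigma_{k+1}^2)$ factor, while the labeling noise between paired modalities contributes the numerator $\alpha$. This yields the population term $\alpha/(1-\sigma_{k+1}^2)$ for both $\gE(f_V^*)$ and $\gE(f_L^*)$; the symmetry between the two bounds comes from the symmetric roles of $U$ and $V$ in the SVD together with the two-sided normalization~(\ref{eqn:normalization}).

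Second I would pass from the population optimum $f_V^*$ to the empirical minimizer $\hat f_V^*$ trained on $n$ samples. Split the $n$ samples into three folds of size $n/3$ (this is where the $n/3$ in the bound comes from): one fold to estimate the empirical loss, and the analysis uses a uniform-convergence bound over the model class $\gF$ to control $|\gL_{\rm SCL} - \widehat{\gL}_{\rm SCL}|$ by the Rademacher complexity $\widehat{\gR}_{n/3}(\gF)$ plus the usual $\sqrt{\log(2/\delta)/(2n/3)}$ deviation term, with the extra additive $\delta$ absorbing a truncation/boundedness step using $\kappa$ as the $\ell_\infty$ bound on features. Then a matrix perturbation argument (Davis–Kahan / Weyl-type) converts closeness in the factorization objective into closeness of the learned subspace to the true top-$k$ eigenspace of $\tilde{P}_M$; the denominator of this perturbation bound is an eigengap, which is exactly why $\Delta_\sigma = \sigma^2_{\floor{3k/4}} - \sigma^2_k$ appears squared, and the factor $k$ (and the polynomial $c \lesssim (k\kappa + 2k\kappa^2+1)^2$ in $\kappa$) comes from accumulating coordinate-wise errors across the $k$ dimensions and through the quadratic negative-pair term. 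Finally, a Lipschitz-type bound shows the linear-probe error changes continuously with the feature map, transferring the subspace-closeness bound into the error bound, which produces the finite-sample generalization term in~(\ref{eqn:generalization-bound}).

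The main obstacle I anticipate is the matrix-perturbation step in the \emph{asymmetric} setting: unlike the symmetric spectral contrastive loss, here one factorizes $\tilde P_M = F_V F_L^\top$ with two independent factors, so ``recovering the eigenspace'' from an approximate minimizer is subtler — the minimizer is only identified up to the $D,R$ ambiguity of Theorem~\ref{thm:optimal-representation}, and one must argue that any near-minimizer's column space is close to $\mathrm{span}(U^k)$ (resp.\ $\mathrm{span}(V^k)$) despite this non-identifiability. I would handle this by passing to the symmetric surrogate: note $F_V F_L^\top \approx \tilde P_M$ implies $F_V F_V^\top$ (after whitening) is close to $\tilde P_M \tilde P_M^\top$, whose spectrum has gap $\Delta_\sigma$ at the relevant index, and then apply a Davis–Kahan bound to the symmetric PSD matrix $\tilde P_M \tilde P_M^\top$; choosing the split index $\floor{3k/4}$ rather than $k$ gives slack so that the empirical perturbation (which can shuffle eigenvectors within the last quarter of the top-$k$ block) does not destroy the bound. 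A secondary technical nuisance is that the $1/\sqrt{\gP_V(x_v)}$ rescaling can blow up features on rare samples, which is where the $\kappa$ truncation and the additive $\delta$ slack in~(\ref{eqn:generalization-bound}) are really doing their work; I would make this rigorous by a union bound over the (exponentially large but finite) sample spaces combined with the boundedness assumption.
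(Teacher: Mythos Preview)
Your proposal is correct and lands on the same core reduction the paper uses: handle the asymmetric factorization by passing to the symmetric surrogate $\tilde P_M \tilde P_M^\top$, whose spectrum has the gap $1-\sigma_{k+1}^2$ at the relevant index, and then run the \citet{haochen} machinery (mean classifier plus Davis--Kahan for the finite-sample part). The paper's proof is organized differently and more economically: it packages exactly your ``symmetric surrogate'' step as a standalone equivalence (Theorem~\ref{thm:multi-uni}, which shows the optimal visual features of $\gL_{\rm SCL}$ coincide, up to an invertible linear map, with those of a \emph{uni-modal} spectral loss built on $\tilde P_T=\tilde P_M\tilde P_M^\top$), and then simply \emph{cites} Theorems~3.8 and~4.2 of \citet{haochen} as black boxes rather than redoing the mean-classifier and perturbation arguments. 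One small step you gloss over but the paper makes explicit: the $\alpha$ in the statement is a \emph{cross-modal} labeling error, whereas the uni-modal analysis needs the visual--visual disagreement $\phi^y=\sum_{x_v,x_v'}\gP_T(x_v,x_v')\mathbbm{1}[y(x_v)\neq y(x_v')]$; the paper bridges these via the indicator triangle inequality $\mathbbm{1}[y(x_v)\neq y(x_v')]\le \mathbbm{1}[y(x_v)\neq y(x_l)]+\mathbbm{1}[y(x_v')\neq y(x_l)]$ to get $\phi^y\le 2\alpha$. Also, the $n/3$ does not come from three independent folds as you suggest, but from partitioning the $n$ i.i.d.\ pairs into triples so that each triple supplies one positive pair and two negatives for an unbiased empirical estimate of $\gL_{\rm SCL}$.
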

In the upper bound of Eq.~\ref{eqn:generalization-bound}, aside from the canonical generalization terms relating to the number of samples and neural network complexity, there are two important factors reflecting the influence of the multi-modal pretraining task, the labeling error $\alpha$ and the singular value $\sigma_{k+1}$. 

\textbf{Labeling error $\alpha$} accounts for the label mismatch between the constructed visual-language pairs, which may differ in practice depending on how the dataset is constructed. For example, the MS-COCO dataset contains human-provided captions for 120K images using Amazon Mechanical Turk \cite{mscoco}, while the large-scale YFCC dataset \cite{thomee2016yfcc100m} contains 99M Flickr images along with their posted titles as captions without filtering or post-processing, which could be quite noisy. A recent work \cite{santurkar2022caption} empirically finds that a single MS-COCO image-caption pair is worth five YFCC captions for CLIP training. These findings can be justified by our theory that the written captions in MS-COCO induce a smaller labeling error $\alpha$.

\textbf{Singular value $\sigma_{k+1}$} is a spectral property of the co-occurrence matrix ${P}_M$. One way to understand its role is from a graph perspective. Specifically, we can regard $P_M$ as a (partial) adjacency matrix of a bipartite graph\footnote{For a bipartite graph, only interleaving edges between $\gX_V$ and $\gX_L$ (represented by ${P}_M$) could contain non-zero weights. So we consider ${P}_M$ for simplicity.} established between the visual set $\gX_V$ and the language set $\gX_L$. According to the spectral graph theory \cite{chung1997spectral}, the singular values generally represent the connectivity of the bipartite graph (\eg how many disjoint sub-graphs), and smaller leading singular values correspond to better connectivity (\eg fewer sub-graphs). Therefore, Theorem \ref{thm:downstream performance} shows that better connectivity (by creating diverse connections between samples) with a smaller $\sigma_{k+1}$ could bring smaller downstream errors. In fact, several recent works can be understood as increasing the diversity of multi-modal pairs by data augmentations. For example, FLIP \cite{li2022scaling} introduces patch masking to the images input, and \citet{santurkar2022caption} rewrite text captions using a GPT model. Our generalization bound provides a theoretical justification for the effectiveness of these approaches.

To warp up, our generalization bounds in Theorem \ref{thm:downstream performance} provide not only guarantees but also principled guidelines for multi-modal contrastive learning: 1) we should create high-quality multi-modal pairs by human writing or automatic filtering to reduce the labeling error $\alpha$, and 2) we should create better multi-modal diversity by data augmentations in both domains to ensure a smaller singular value $\sigma_{k+1}$.

\subsection{Discussion}
In this section, we establish the first comprehensive study on the theoretical guarantees of multi-modal contrastive learning in terms of two aspects: optimal representations and downstream guarantees. A closely related work is \citet{haochen} that establishes theoretical guarantees for self-supervised contrastive learning. Our analysis extends their theory to the multi-modal setting, with the following key differences:
\begin{itemize}
    \item[1)] Data generation. Their analysis only applies to positive pairs $(x,x^+)$ that are both augmented samples from the same domain $\gX$, while the multi-modal pair $(x_v,x_l)$ are directly given by data samples and are asymmetric ones from \emph{different domains $\gX_V,\gX_L$}. Correspondingly, our analysis deals with the multi-modal co-occurrence matrix $\tilde{P}_M$ instead of the aggregated augmentation graph $\tilde{A}$ defined over $\gX$ in \citet{haochen} as the approximation target.

    \item[2)] Learning objective. Their analysis only applies to the uni-model spectral contrastive loss using a Siamese architecture, which corresponds to \emph{symmetric} matrix factorization. Instead, in multi-modal learning, the positive pairs are not symmetric and require different encoders in general. Correspondingly, we propose the multi-modal spectral contrastive loss that corresponds to \emph{asymmetric} matrix factorization, which requires different techniques to analyze and yield different optimal representations and downstream generalization bounds. 

\end{itemize}

\begin{figure*}[t]
    \centering
    \subfigure[SimCLR]{
    \includegraphics[width=.3\textwidth]{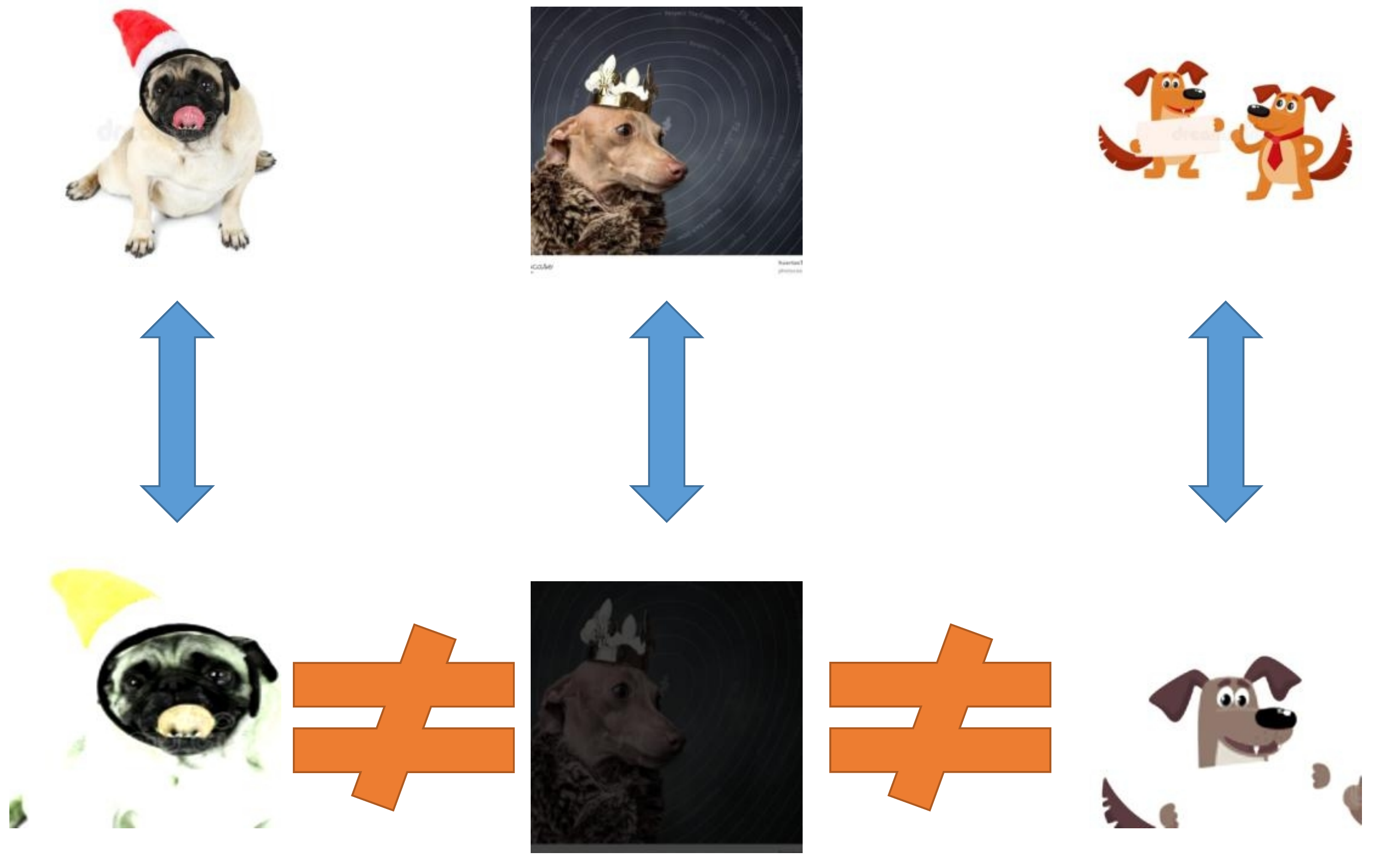}
    \label{fig:hierarchical simclr}
    }    
    \subfigure[CLIP]{
    \includegraphics[width=.3\textwidth]{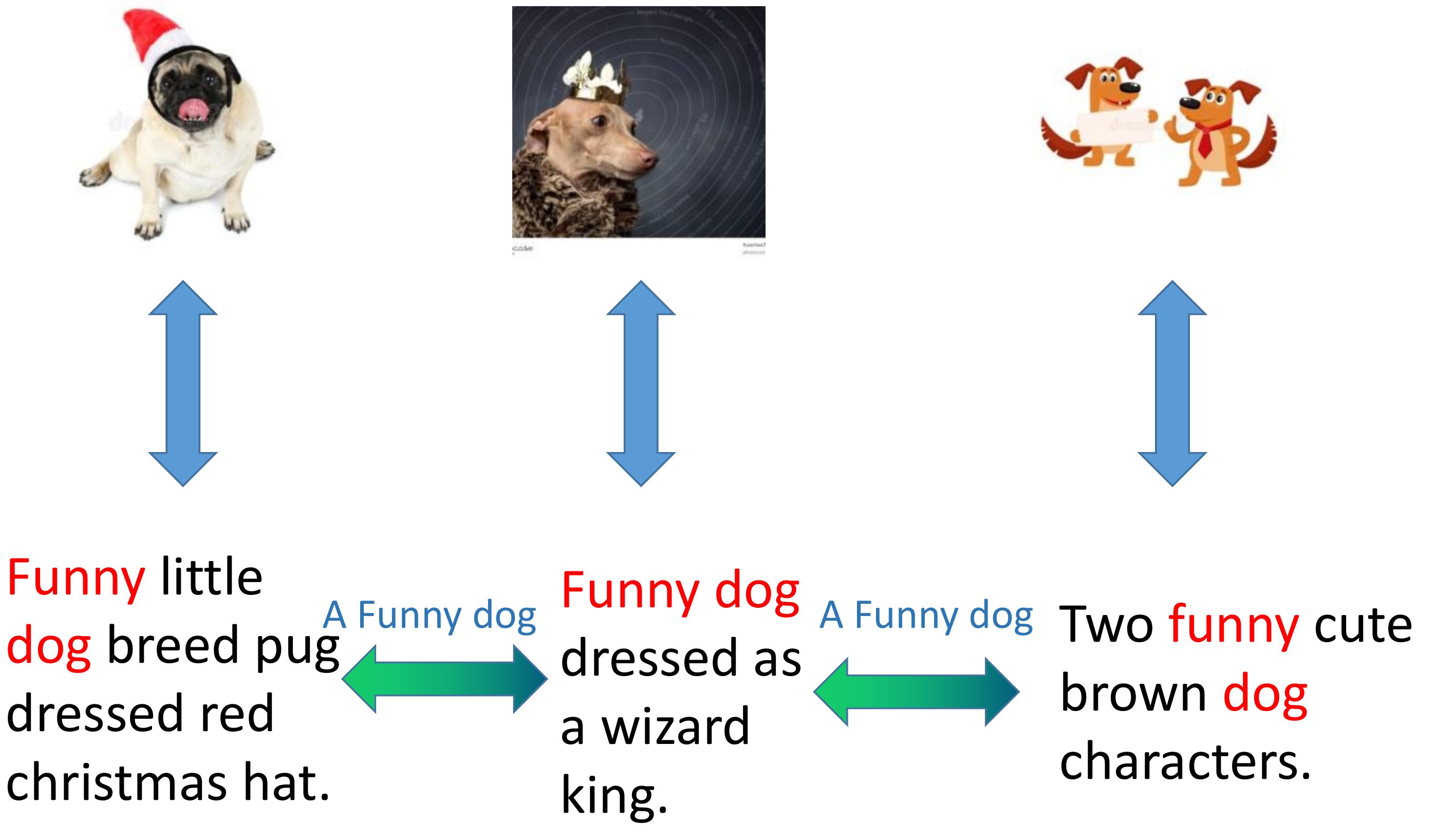}
    \label{fig:hierarchical clip}
    }
    \caption{Illustration of raw and augmented samples generated by SimCLR and CLIP on the CC12M dataset \cite{cc12m}, where the former are generated by manual data augmentations and the latter are induced by visual-language pairs.}
    \label{fig:hierarchical samples}
\end{figure*}
\section{Formal Comparison between Multi-modal and Self-Supervised Contrastive Learning}
\label{sec:comparison}

In Section \ref{sec:downstream guarantees}, we have established a theoretical framework for analyzing multi-modal contrastive learning (MMCL) from the perspective of asymmetric matrix factorization. Meanwhile, we know that MMCL originates from self-supervised contrastive learning (SSCL) like SimCLR \cite{simclr} and MoCo \cite{moco}, which is self-supervised (usually visual). These two contrastive learning paradigms have a close resemblance by both adopting InfoNCE-like objectives, while they differ mainly on the chosen positive and negative pairs. Take two representative methods in each paradigm, CLIP (MMCL) and SimCLR (SSCL), as an example. CLIP adopts visual-language pairs collected from the Internet, while SimCLR generates positive pairs by visual data augmentations like cropping and color jittering. Despite the similarity in learning objectives, CLIP shows much better performance on zero-shot and few-shot transfer learning tasks than SimCLR \cite{clip}, suggesting that different sources of positive pairs have a crucial impact on the downstream performance of contrastive learning. Nevertheless, there still lack theoretical understanding and characterization of this phenomenon.

In this section, we propose a unified theoretical framework to understand the inherent connections between the two paradigms (Section \ref{sec:unified-perspective}). Based on this unified perspective, we compare CLIP and SimCLR on real-world data to understand their differences in downstream tasks (Section \ref{sec:unified-perspective}). At last, we theoretically analyze the differences from a data generation perspective (Section \ref{sec:heirarchical-random-graph}).

\subsection{Unified Formulation and Analysis for Multi-modal and Self-Supervised Contrastive Learning 
}
\label{sec:unified-perspective}
We begin with a brief introduction to self-supervised contrastive learning. Instead of using raw images $x_v\in\gX_V$ as in multi-modal contrastive learning, self-supervised contrastive learning like SimCLR \cite{simclr} applies aggressive data augmentation $\gA(\cdot|x_v)$ two times and get a pair of augmented samples $x_a,x_a^+\in\gX_A$ as positive pairs to align together.
Accordingly, the negative sample is defined as augmented samples $x^-_a$ independently drawn from its marginal distribution. The self-supervised spectral contrastive loss \cite{haochen} learns a Siamese visual encoder $f_V:\gX_A\to\sR^k$ with 
\begin{equation}
\begin{aligned}
\gL^{\rm ss}_{\rm SCL}(f_V)=&-2\mathbb{E}_{x_a,x^+_a}f_V(x_a)^\top f_V(x^+_a)\\
&~~+\mathbb{E}_{x_a,x^-_a}(f_V(x_a)^\top f_V(x_a^-))^2,
\end{aligned}
\end{equation}
where the joint distribution of positive pairs follows
\begin{equation}
\gP_A(x_a,x_a^+)=\E_{x_v\sim\gP_V}\gA(x_a|x_v)\gA(x'_a|x_v),
\end{equation}
which is marginalized over the augmentations of all natural samples. 
Different from multi-modal learning, the joint distribution is symmetric, \ie $\gP_A(x_a,x_a^+)=\gP_A(x^+_a,x_a)$, and \citet{haochen} show that this self-supervised loss is equivalent to a symmetric matrix factorization (SMF) objective. Nevertheless, there is a noticeable difference between the multi-modal and self-supervised objectives, that the joint distribution $\gP_M$ defines connections between two domains $\gX_V,\gX_L$  while $\gP_A$ defines connections only among visual samples in $\gX_A$. It thus remains unclear to us how to compare the quality of multi-modal and self-supervised pairs and characterize their influence on downstream tasks.

A key insight here: we notice that CLIP does not only work well for multi-modal tasks like image-text retrieval, but also performs surprisingly well on visual-only tasks like zero-shot image classification, which indicates that it also implicitly aligns semantically similar visual samples together during the joint embedding process. The following theorem characterizes this intuition by establishing an equivalence between multi-modal contrastive learning and a corresponding self-supervised contrastive learning objective among visual-only samples.
\begin{theorem}
The optimal visual representations of multi-modal contrastive learning (Eq.~\ref{eqn:optimal-V}) are equivalent (up to scaling and rotation) to that of the following uni-modal contrastive learning objective,
\begin{equation}
\begin{aligned}
\gL^{\rm uni}_{\rm SCL}(f_V)=&-2\mathbb{E}_{x_v,x^+_v}f_V(x_v)^\top f_V(x^+_v)\\
&~~+\mathbb{E}_{x_v,x^-_v}(f_V(x_v)^\top f_V(x_v^-))^2,
\end{aligned}
\label{eqn:self-supervised-clip}
\end{equation}
where $(x_v,x_v^+)$ are drawn from the text-induced joint distribution over visual samples $\gP_T$ that $\forall\ x_v,x'_v\in\gX_V$,
\begin{equation}
 \gP_T(x_v,x'_v)=\E_{x_l\sim \gP_L}\gP_M(x_v|x_l)\gP_M(x'_v|x_l),
\end{equation}
with $\gP_M(x_v|x_l)=\gP_M(x_v,x_l)/\gP_L(x_l)$,
and $x_v^-$ is independently drawn from $\gP_V$.
Accordingly, the linear probing error $\gE(f_V^*)$ of multi-modal learning is also equal to that of the self-supervised learning in Eq.~\ref{eqn:self-supervised-clip}.
\label{thm:multi-uni}
\end{theorem}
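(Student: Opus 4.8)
The plan is to recognize $\gL^{\rm uni}_{\rm SCL}$ as a \emph{symmetric} matrix factorization objective, identify its target matrix with $\tilde{P}_M\tilde{P}_M^\top$, and then read off the optimal $f_V$ from the singular value decomposition $\tilde{P}_M=U\Sigma V^\top$ that already appears in Theorem~\ref{thm:optimal-representation}. Because the left singular vectors $U$ of $\tilde{P}_M$ are exactly the eigenvectors of $\tilde{P}_M\tilde{P}_M^\top$, the uni-modal optimum will turn out to be a member of the same family as Eq.~\ref{eqn:optimal-V}, differing only by a diagonal scaling and an orthogonal rotation, both of which a downstream linear head can absorb.

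First I would record the relevant marginal of $\gP_T$: since $\sum_{x'_v}\gP_M(x'_v|x_l)=1$,
\[
\sum_{x'_v}\gP_T(x_v,x'_v)=\E_{x_l\sim\gP_L}\gP_M(x_v|x_l)=\sum_{x_l}\gP_M(x_v,x_l)=\gP_V(x_v),
\]
so the visual marginal of the text-induced distribution is $\gP_V$; this is precisely what makes $x_v^-\sim\gP_V$ the correct negative distribution and legitimizes the two-sided normalization by $\gP_V$. Then, mimicking the algebra of Theorem~\ref{thm:spectral=asymetric} but with both factors equal to $F_V$, where $(F_V)_{x_v}=\sqrt{\gP_V(x_v)}\,f_V(x_v)^\top$ and $(\tilde{P}_T)_{x_v,x'_v}=\gP_T(x_v,x'_v)/\sqrt{\gP_V(x_v)\gP_V(x'_v)}$, I would expand $\|\tilde{P}_T-F_VF_V^\top\|^2$ term by term to obtain $\|\tilde{P}_T-F_VF_V^\top\|^2=\gL^{\rm uni}_{\rm SCL}(f_V)+\text{const}$, so minimizing the uni-modal loss over $f_V$ is equivalent to the rank-$k$ PSD factorization of $\tilde{P}_T$.

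The key step is the identity $\tilde{P}_T=\tilde{P}_M\tilde{P}_M^\top$: expanding $\gP_T(x_v,x'_v)=\sum_{x_l}\gP_M(x_v,x_l)\gP_M(x'_v,x_l)/\gP_L(x_l)$ and dividing by $\sqrt{\gP_V(x_v)\gP_V(x'_v)}$ gives exactly $\sum_{x_l}(\tilde{P}_M)_{x_v,x_l}(\tilde{P}_M)_{x'_v,x_l}$. Using $\tilde{P}_M=U\Sigma V^\top$ with the columns of $V$ orthonormal, this becomes $\tilde{P}_T=U\Sigma^2U^\top$, so $\tilde{P}_T$ is PSD with eigenvalues $\sigma_1^2\ge\dots\ge\sigma_r^2\ge0$ and eigenvectors the columns of $U$. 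By the Eckart--Young theorem, the minimizers of $\|\tilde{P}_T-F_VF_V^\top\|^2$ over rank-$\le k$ matrices are exactly $F_V=U^k\diag(\sigma_1,\dots,\sigma_k)R$ with $R\in\sR^{k\times k}$ an arbitrary orthogonal matrix; reversing $(F_V)_{x_v}=\sqrt{\gP_V(x_v)}f_V(x_v)^\top$ yields $f_V^{{\rm uni},*}(x_v)=\tfrac{1}{\sqrt{\gP_V(x_v)}}\bigl(U^k_{x_v}\diag(\sigma_1,\dots,\sigma_k)R\bigr)^\top$, which is Eq.~\ref{eqn:optimal-V} with the free invertible diagonal $D$ instantiated to $\diag(\sigma_1,\dots,\sigma_k)$.

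For the linear probing claim, any multi-modal optimum $f_V^*$ (parameters $D,R$) and any uni-modal optimum $f_V^{{\rm uni},*}$ (rotation $R'$) are related by a fixed invertible linear map $M\in\sR^{k\times k}$, $f_V^{{\rm uni},*}(x_v)=M f_V^*(x_v)$ with $M=R'^\top\diag(\sigma_1,\dots,\sigma_k)D^{-1}R$; hence replacing a linear head $B_V$ over $f_V^*$ by $M^{-\top}B_V$ over $f_V^{{\rm uni},*}$ leaves $g_{f,B_V}$, and thus the error, unchanged, so $\gE(f_V^{{\rm uni},*})=\gE(f_V^*)$. The main obstacle is not the computation but the bookkeeping of non-uniqueness: when $\tilde{P}_M$ has repeated singular values, $U$ and the eigenbasis of $\tilde{P}_T$ are determined only up to block-orthogonal transformations, so one must check that every optimum of $\gL^{\rm uni}_{\rm SCL}$ and every optimum of the multi-modal loss lie in the same scaling-plus-rotation orbit; this is resolved exactly as the analogous degeneracy in Theorem~\ref{thm:optimal-representation}, by absorbing the block rotations into $R$ (and into $M$ on the downstream side), with the edge case $\sigma_k=0$ only shrinking the effective dimension on both sides.
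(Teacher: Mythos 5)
Your proposal is correct and follows essentially the same route as the paper: rewrite $\gL^{\rm uni}_{\rm SCL}$ as the symmetric matrix factorization $\|\tilde{P}_T-F_VF_V^\top\|^2$, establish the key identity $\tilde{P}_T=\tilde{P}_M\tilde{P}_M^\top$ so that both optima share the left singular vectors $U$ of $\tilde{P}_M$, and conclude that the two optimal $f_V$'s differ by an invertible linear map which a downstream linear head absorbs. You are in fact slightly more careful than the paper at two points---you correctly pin the diagonal factor of the symmetric optimum to $\diag(\sigma_1,\dots,\sigma_k)$ (the paper loosely writes an arbitrary invertible $D_T$, which is only consistent up to signs), and you flag the degenerate cases of repeated or vanishing singular values that the paper does not discuss.
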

Theorem \ref{thm:multi-uni} draws an inherent connection between multi-modal contrastive learning (MMCL) and self-supervised contrastive learning (SSCL) by showing that MMCL also implicitly performs uni-modal contrastive learning among visual samples, just like SSCL. Notably, different from SSCL that relies on manual data augmentations $\gA(x_a|x_v)$, MMCL's uni-modal objective (Eq.~\ref{eqn:self-supervised-clip}) leverages the multi-modal conditional distribution $\gP_M(x_v|x_l)$ to generate positive visual pairs \emph{via languages as a pivot}. In other words, the multi-modal signals serve as a new type of data augmentation such that image pairs $x_v,x_v^+$ with the same (or similar) text descriptions can serve as positive pairs for uni-modal contrastive learning, as illustrated in Figure \ref{fig:hierarchical clip}.

This unified perspective enables us to understand the advantage of CLIP over SimCLR for visual representation learning \cite{clip}. Intuitively, compared to SimCLR relying on object-agnostic and low-level manual data augmentations, \eg color and contrast variation in Figure \ref{fig:hierarchical simclr}, text descriptions contain high-level semantics of images (\eg ``funny'', ``dog'' in Figure \ref{fig:hierarchical clip}), and the use of the text-induced augmentation in CLIP can bridge semantically similar images more effectively. Thus, CLIP has two main advantages over SimCLR for downstream tasks according to Theorem \ref{thm:downstream performance}. First,  CLIP has a lower labeling error because the text-induced positive pairs usually contain the same object and while manual data augmentations often lose the object. Second, CLIP yields better connectivity among visual samples using high-level semantics. In the following, we provide empirical and theoretical comparisons to characterize the differences between them.

\begin{table}[]
\centering
\caption{
Comparison (in the uni-model setting) of estimated labeling error and intra-class connectivity between CLIP and SimCLR.}
\begin{tabular}{@{}lcc@{}}
\toprule
                         & CLIP           & SimCLR \\ \midrule
Labeling Error ($\downarrow$)          & \textbf{0.601} & 0.846 \\
Intra-class Connectivity ($\uparrow$)  & \textbf{1.322} & 1.072  \\ \bottomrule
\end{tabular}
\label{tab:empirical results and lb and it}
\end{table}

Based on the unified theoretical understanding above, we further investigate the differences between the augmentation-induced joint distribution $\gP_A$ (self-supervised, SimCLR) and the text-induced one $\gP_T$ (multi-modal, CLIP) on real-world data. For a fair comparison, we pretrain the same backbone ViT-B \cite{dosovitskiy2020image} on the same dataset, YFCC15M \cite{thomee2016yfcc100m,clip}, and evaluate the learned representations on ImageNet \cite{deng2009imagenet}. For efficiency, we randomly draw 1,000 samples from 10 random classes of the ImageNet validation set.
According to the matrix factorization perspective, the learned features approximate the ground-truth distribution (unknown to us). Thus, we can approximately calculate the (uni-modal) labeling error and sample connectivity using learned representations. For an intuitive measure of the desired sample connectivity, we calculate the average feature similarity between intra-class samples as a surrogate metric. See details in Appendix \ref{sec:comparison-details}.

From Table \ref{tab:empirical results and lb and it}, we observe that the labeling error of SimCLR is indeed much larger than that of CLIP (0.846 \textit{v.s.~}0.601), suggesting that the text-induced (implicit) positive images have higher semantic consistency than manual image transformations. Meanwhile, we also observe that CLIP has high intra-class connectivity than SimCLR (1.322 \textit{v.s.}~1.072), suggesting that text descriptions can induce better intra-class sample diversity with the high-level semantic relationship.

\begin{figure*}[t]
    \centering
    \subfigure[an illustration of the hierarchical structure]{
    \includegraphics[width=.35\textwidth]{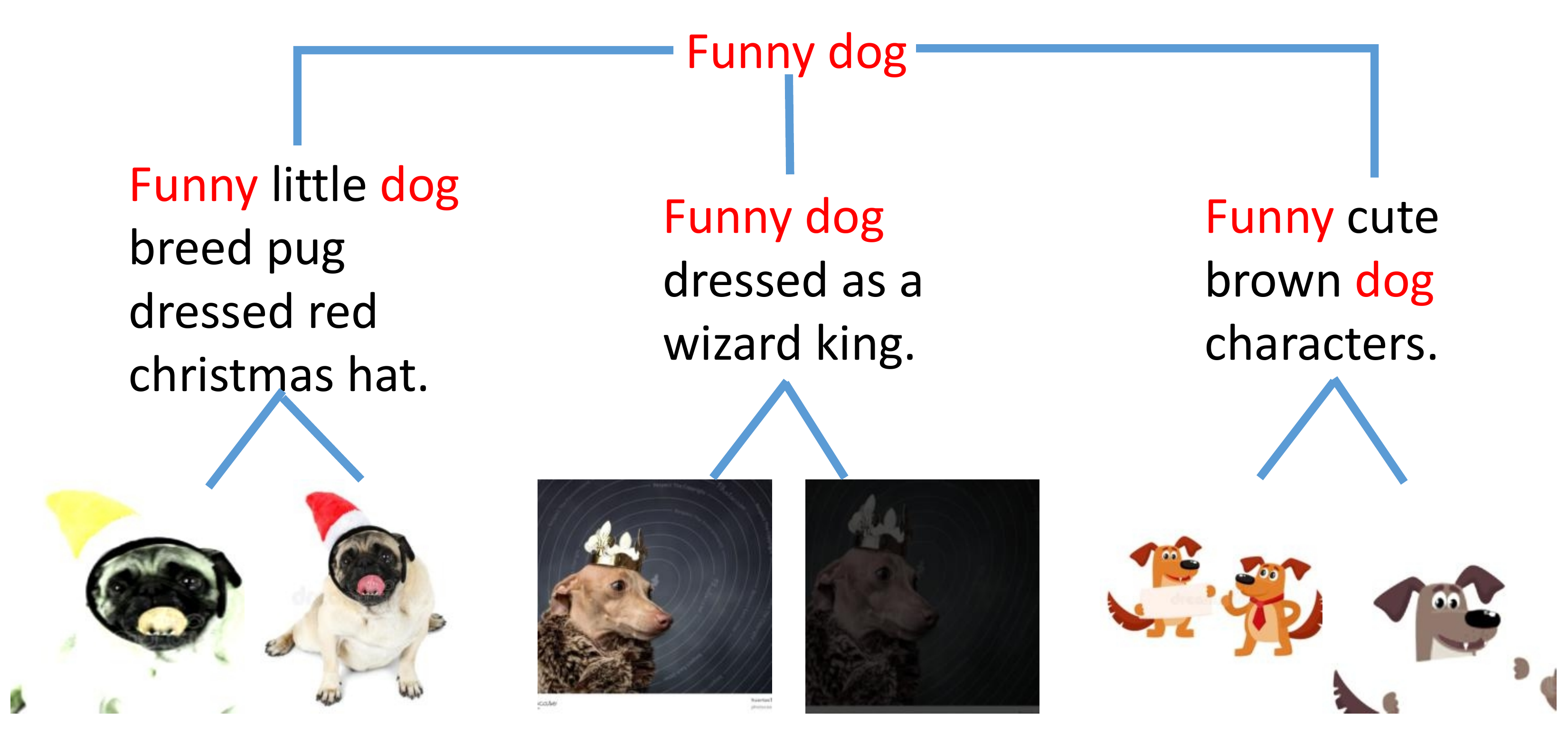}
    \label{fig:hierarchical-structure-example}
    }    
    \subfigure[a hierarchical random graph]{
    \includegraphics[width=.3\textwidth]{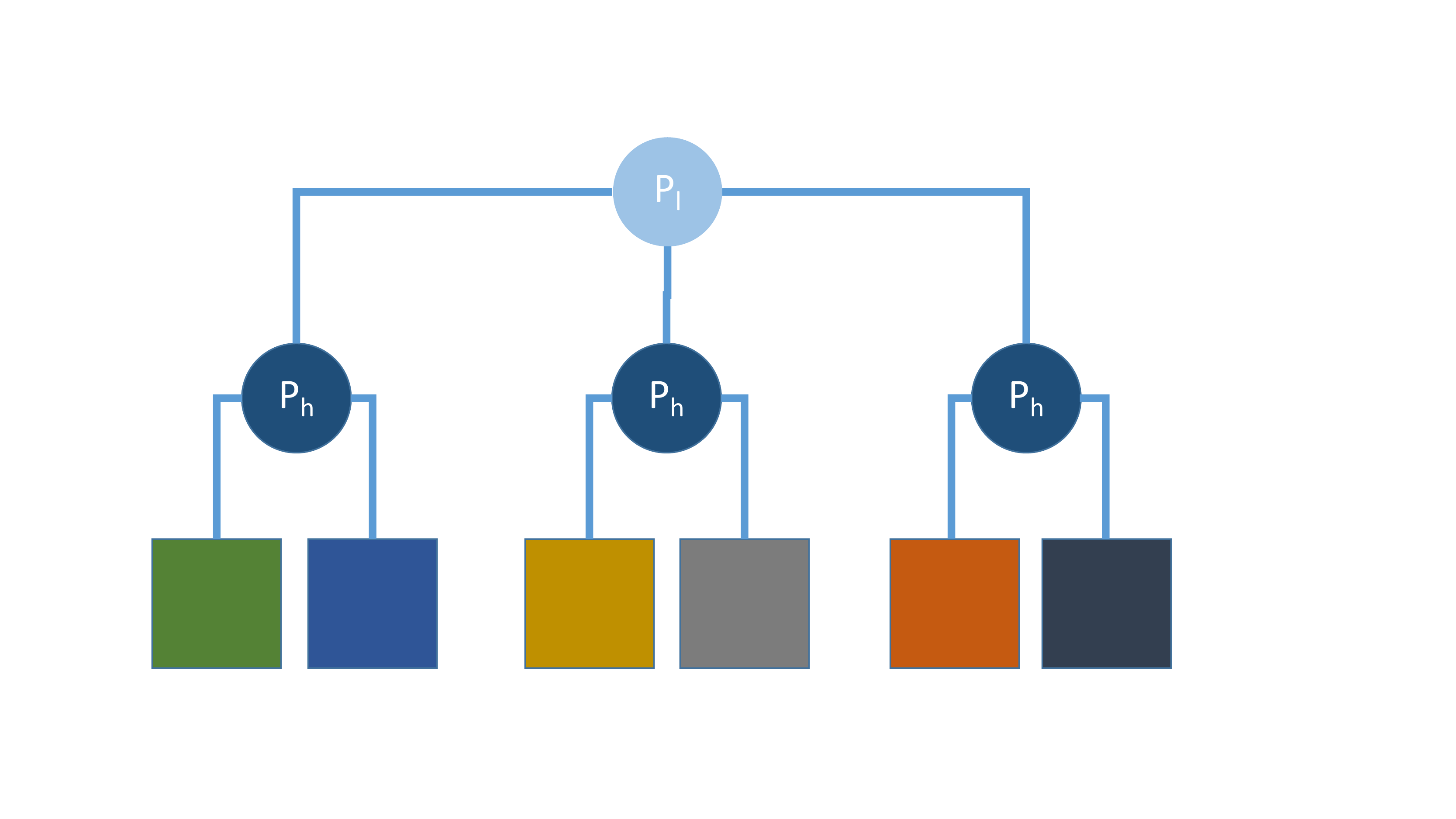}
    \label{fig:hierachical-random-graph}
    }
    \caption{Illustrations of the hierarchical structure on real-world datasets CC12M and a hierarchical random graph with two hidden layers. Here, each internal node is associated with a probability that a pair of vertices in the left and right subtrees of that node are connected.}
\label{fig:hi visualization}
\end{figure*}

\subsection{
A Data Generation Perspective via the Language of Hierarchical Random Graph}
\label{sec:heirarchical-random-graph}

As discussed above, the key difference between augmentation and text-induced positive pairs is that they operate on different levels of semantics.  This difference can be understood and modeled in a hierarchical structure of data generation. As shown in the examples in Figure \ref{fig:hierarchical-structure-example}, we can regard that the three images of funny dogs are firstly generated under high-level concepts captured by their text description, and then adding more detailed variations that can be captured by data augmentations. Therefore, the shared text span ``funny dog'' can draw these images together, but the commonly used data augmentations cannot because they are very different in pose and style.

Inspired by the observation that the joint distribution between positive visual pairs $\gP_T(x_v,x_v')$ can be regarded as the adjacency matrix of a graph over all image samples \cite{haochen}, we model this distribution (graph) with hierarchical random graph \cite{clauset2008hierarchical} designed to model the hidden structure of a given graph. Different from vanilla random graph where each edge is randomly drawn with the same probability, hierarchical random graph assumes that the edges are drawn according to a hierarchical tree, which suits our need to characterize different levels of semantics. In a hierarchical random graph $\gG$ shown in Figure \ref{fig:hierachical-random-graph}, each internal node $s$ is associated with a probability $p_s$, each leaf node is a node in the original graph, and the probability of having an edge between two nodes is the probability contained in their lowest common ancestor node. In our case, we assume two hidden layers for simplicity, with $p_l$ modelling the probability high-level connection in the first layer and $p_h$ modeling the probability of lower-level connection in the second layer. We assume $p_h>p_l$ as there are less high-level interactions between samples.

The following theorem shows that a larger high-level connection probability $p_l$ yields better downstream performance by inducing better graph connectivity (algebraically measured by the singular value $\sigma_{t}$).
\begin{theorem}
    For two three-layer hierarchical random graphs $\mathcal{G}$, $\mathcal{G}'$ with probabilities $(p_l, p_h),(p'_l, p'_h)$, respectively. If $p_h-p_l \leq p'_{h}-p'_{l}$, we have
    \begin{align*}
       \sigma_t \leq \sigma_t',
    \end{align*}
    where the $\sigma_t, \sigma'_t$ are the $t$-th largest singular values of $\gG,\gG'$, respectively.
    According to Theorem \ref{thm:downstream performance}, smaller singular value indicates better downstream performance under the same labeling error $\alpha$. Therefore, contrastive learning with samples generated according to graph $\gG'$ will have better downstream performance.
    \label{thm:eigenvalues of hierarchal graph}
\end{theorem}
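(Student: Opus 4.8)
The plan is to compute the spectrum of the adjacency matrix of a three-layer hierarchical random graph explicitly, exploiting its highly symmetric block structure, and then compare the ordered singular values as a function of the gap $p_h - p_l$. First I would set up coordinates: suppose the hierarchical tree partitions the $N$ image samples into blocks corresponding to the leaves of the tree. With two hidden layers, the root splits the samples into two halves (connected across the split with probability $p_l$), and each half splits again into sub-blocks connected internally with probability $p_h$. The (expected) adjacency matrix $A$ therefore has a nested block form: it is $p_h$ on the diagonal blocks, $p_l$ on the off-diagonal blocks within the same first-layer branch, and $p_l$ across the two top-level branches — more precisely, $A = p_l (\mathbf{1}\mathbf{1}^\top) + (p_h - p_l)\,\mathrm{blockdiag}(\mathbf{1}_{m}\mathbf{1}_{m}^\top)$ up to the second-layer structure, where $m$ is the block size. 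The key observation is that $A$ lies in the algebra generated by the all-ones matrices on the nested partition, so its nonzero eigenvalues are determined by a small ``quotient'' matrix, and all remaining eigenvalues are $0$ (with large multiplicity).

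Second, I would diagonalize. The matrix $A$ has eigenvectors that are (i) constant on all blocks, (ii) constant within each second-layer block but summing to zero across blocks of a given first-layer branch, and (iii) zero-sum within a block. Types (ii) and (iii) give eigenvalues that are linear in $p_h - p_l$ (indeed proportional to it, since the $p_l(\mathbf 1\mathbf 1^\top)$ term is annihilated by any zero-sum vector), while the type-(i) eigenvalue involves $p_l$ and $p_h$ together but is the Perron eigenvalue and is not the relevant $\sigma_t$ for the comparison (for $t$ beyond the top few). Concretely, the ``interesting'' singular values $\sigma_t$ for the range of $t$ used in Theorem~\ref{thm:downstream performance} are exactly of the form $c_t\,(p_h - p_l)$ for explicit positive constants $c_t$ depending only on the block sizes (which are the same for $\gG$ and $\gG'$ since the tree shape is fixed). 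Then $\sigma_t = c_t (p_h - p_l) \le c_t(p_h' - p_l') = \sigma_t'$ follows immediately from the hypothesis $p_h - p_l \le p_h' - p_l'$ and $c_t \ge 0$, and the final downstream conclusion is just an invocation of Theorem~\ref{thm:downstream performance}.

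The main obstacle is handling the Perron (top) eigenvalue and the interaction of the two hidden layers cleanly: the largest eigenvalue is \emph{not} monotone in $p_h - p_l$ alone (it also grows with $p_l$), so the statement can only be about $\sigma_t$ for $t$ large enough that we are past the eigenvalues contaminated by the $p_l(\mathbf 1 \mathbf 1^\top)$ term. I would need to argue carefully that the indices $t$ relevant to the bound in Theorem~\ref{thm:downstream performance} (namely $t$ around $k$, $3k/4$, $k+1$, with $k \ll N$) fall in the regime where $\sigma_t$ is purely proportional to $p_h - p_l$ — this uses $k$ being small relative to the number of second-layer blocks, so that the few ``$p_l$-contaminated'' eigenvalues all sit above index $k$. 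A secondary subtlety is the passage from the expected adjacency matrix to the realized random graph: I would either state the theorem at the level of the expected graph (population spectrum), or invoke a standard matrix concentration bound (e.g.\ matrix Bernstein) to show the empirical singular values concentrate around the population ones up to $o(p_h - p_l)$ fluctuations, preserving the ordering. If a strict inequality is wanted rather than $\le$, one assumes $p_h - p_l < p_h' - p_l'$ and $c_t > 0$, which holds whenever the block is nontrivial.
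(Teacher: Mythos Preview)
Your plan is essentially the same as the paper's: both compute the full spectrum of the block-structured co-occurrence matrix explicitly and read off that the nontrivial eigenvalues equal $s_h(p_h-p_l)$ (with multiplicity), the rest being $0$, so monotonicity in $p_h-p_l$ is immediate. Your eigenspace decomposition (constant / block-constant zero-sum / within-block zero-sum vectors) is in fact cleaner than the paper's route, which grinds out the characteristic polynomial by row and column operations on $|\sigma E - P|$; both arrive at exactly the same eigenvalue list.

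One point where you overcomplicate things: your ``main obstacle'' about the Perron eigenvalue is not actually an obstacle here. Because $P$ is a co-occurrence matrix, its row sums are all equal to the fixed constant $s_h p_h + (s_l-1)s_h p_l = 1/(s_l s_h)$, which is the \emph{same} for $\gG$ and $\gG'$ (the total probability constraint ties $p_h$ and $p_l$ together). Hence $\sigma_1 = \sigma_1'$ exactly, and the inequality $\sigma_t \le \sigma_t'$ holds for \emph{every} $t$, not just for $t$ past the top eigenvalue. You do not need any argument about ``$k$ being small relative to the number of blocks.'' The paper uses this normalization explicitly. Your secondary concern about concentration of the realized random graph around its expectation is legitimate in principle, but the paper simply works at the population level and ignores it.
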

Theorem \ref{thm:eigenvalues of hierarchal graph} shows smaller $p_h-p_l$ can bring better downstream generalization\footnote{We note that two quantities $p_h,p_l$ are not independent. Since the total probability sums to one, a higher $p_h$ means a lower $p_l$, and vice versa.}. In practice, we can improve $p_h$ by generating positive samples sharing common high-level semantics, as done in CLIP with the text description of the image. Therefore, our hierarchical random graph perspective can help characterize the benefit of CLIP over SimCLR from the kind of information they leverage. This perspective also suggests a way to improve (self-supervised) contrastive learning, that is to add more diverse with better augmentation strategies, such as, using realistic generative models like diffusion models \cite{ho2020denoising}. In the next section, we provide empirical verification of this understanding by showing how MMCL information can be used to boost augmentation-based SSCL methods like SimCLR.
\begin{table*}[h]
\centering
\caption{The linear probing accuracy of SimCLR and its CLIP-guided variants on ImageNet (ViT-B, 100-epoch training).}
\begin{tabular}{@{}lccccc@{}}
\toprule
Method          & Baseline (SimCLR) & AddNewPositive  & DropFalsePositive & DropFalseNegative & DropEasyNegative \\ \midrule
Linear Acc & 61.2     & \textbf{67.4 (+6.2)} & 61.8 (+0.6)                   & 61.4 (+0.2)                   & 62.3 (+1.1)                \\    \bottomrule
\end{tabular}
\label{tab:linear-probing}
\end{table*}  

\section{Boosting SimCLR with CLIP-Guided Resampling
}
\label{sec:verification}

Learning from the theoretical and empirical evidence in Section \ref{sec:comparison}, we have known that compared to self-supervision, languages are better at generating positive pairs for visual representation learning due to their advantage of capturing high-level similarities. In this section, we further leverage this advantage to improve self-supervised learning. 

Prior to ours, there are several papers exploring the combination of self-supervision and multi-modal supervision, such as SLIP \citep{mu2022slip}, DeCLIP \cite{declip}, and FLIP \cite{li2022scaling}. Contrary to these methods all focusing on pretraining on multi-modal data, in this work, we focus on utilizing the estimated multi-modal information in a pretrained CLIP model to improve self-supervised contrastive learning (SimCLR) from unlabeled images alone, which, up to our knowledge, is not considered yet. Our experiment is designed as a verification of our analysis above, because if the language information is as helpful for uni-modal contrastive learning as we suppose, the CLIP-guided SimCLR can obtain better performance on downstream tasks.

\subsection{Methods} 
Following our analysis, we consider four CLIP-guided resampling strategies for leveraging CLIP to help self-supervised contrastive learning with SimCLR. 
    
\textbf{AddNewPositive} \& \textbf{DropFalsePositive}. Because multi-modal contrastive learning is good at generating more diverse and consistent positive pairs (Figure \ref{fig:hierarchical clip}), we leverage the pretrained CLIP to generate a new pair of positive samples for training SimCLR. Specifically, in a mini-batch, we find the nearest neighbor of each sample $x$ in the feature space of CLIP, denoted as $\gN(x)$, and regard $(x,\gN(x))$ as a pair of positive samples. We mix this new positive pair with the original self-supervised one with a tunable ratio. On the other hand, because multi-modal pairs have less labeling error (Table \ref{tab:empirical results and lb and it}), CLIP can also be leveraged to filter out false positive pairs that may contain different objects with a tunable ratio. 
    
\textbf{DropFalseNegative} \& \textbf{DropEasyNegative}.  We can also leverage CLIP to select negative samples. One option is to drop negatives with the largest similarity, which could be false negatives from the same class of positive samples. Another is to drop negative samples with the smallest similarity, with corresponds to easy negative samples that are already pushed apart.

For the CLIP model, we adopt the pretrained ViT-B provided by the official implementation. For SimCLR, following the standard protocol, we pretrain a ResNet-50 \cite{he2016deep} on ImageNet for 100 epochs. See details in Appendix \ref{sec:verification-details}.

\subsection{Results} 
From Table \ref{tab:linear-probing}, we can see that all four techniques can bring benefits over the vanilla SimCLR, suggesting that the multi-modal information in CLIP indeed benefits self-supervised learning in terms of both positive and negative sample selection. Meanwhile, comparing the four strategies, we notice that AddNewPositive with CLIP brings the highest improvement of 6.2\% accuracy over the vanilla SimCLR. This successfully verifies our previous analysis that multi-modal learning is better at generating diverse and positive samples than self-supervised learning for better downstream performance. We leave more advanced techniques for leveraging this observation for future work.

\section{Conclusion}
In this paper, we proposed the first theoretical framework for multi-modal contrastive learning. By drawing the connection to asymmetric matrix factorization, we characterized its optimal representations and established the first guarantees on the downstream generalization of multi-modal contrastive learning. Based on our framework, we provided a unified perspective of multi-modal and self-supervised contrastive learning, characterized their differences on real-world data, and verified our insights by bringing benefits on benchmark datasets. In this way, our theory has established a principled understanding of multi-modal contrastive learning, while delivering practical insights for combining multi-modal and self-supervised learning methods.

\section*{Acknowledgement}
Yisen Wang is partially supported by the National Key R\&D Program of China (2022ZD0160304), the National Natural Science Foundation of China (62006153), Open Research Projects of Zhejiang Lab (No. 2022RC0AB05), and Huawei Technologies Inc.

\bibliography{example_paper}
\bibliographystyle{icml2023}

\newpage
\appendix
\onecolumn

\section{Experimental Details}

\subsection{Details of Empirical Comparison in Section \ref{sec:unified-perspective}}
\label{sec:comparison-details}

\textbf{Approximation of Data Probability.} Similar to the multi-modal spectral loss,  Equation (\ref{eqn:self-supervised-clip}) can be rewritten as a matrix decomposition loss, i.e., $\gL_{\rm SCL}^{\rm uni}(f_V) = \Vert \tilde{P}_T - F_V F_V^\top \Vert ^2 +const$, where $P_T$ is the co-occurrence matrix of the distribution $\gP_T(x_v,x_v')$, $(\tilde{P}_T)_{(x_v,x_v')} = \frac{\gP_T(x_v,x_v')}{\sqrt{\gP_V(x_v)\gP_V(x_v')}} $ and $(F_V F_V^\top)_{(x_v,x_v')} = \frac{f_V(x_v)^\top f_V(x_v')}{\sqrt{P_V(x_v)P_V(x_v')}} $. So $(P_T)_{(x_v,x_v')}$ can be approximated by $f_V(x_v)^\top f_V(x_v')$ when the loss is minimized. Similarly, we can estimate the co-occurrence matrix $(P_{A})_{(x_a,x_a^+)}$ of $\gP_A(x_a,x_a^+)$ by $f_V(x_a)^\top f_V(x_a^+)$. In practice, we use ViT-Base trained by CLIP \citep{clip} and SimCLR \citep{simclr}  as the encoders.

\textbf{Setup.} We respectively 
 encode the samples from 1000 samples randomly selected from 10 classes of ImageNet \citep{imagenet} with two encoders and construct the embedding matrix $\hat{F}_T \in \mathbb{R}^{1000\times k}$ and $\hat{F}_A \in \mathbb{R}^{1000\times k}$ ($k$ is the output dimension of ViT-Base)\footnote{As the samples of the $P_A$ are augmented images, we transform the selected samples with the augmentations used in SimCLR when constructing $\hat{F}_A$.}. Then we normalize the similarity matrices of the embeddings to and estimate the co-occurrence matrices with them, i.e., $\hat{P}_{T} = \operatorname{normalize}(\hat{F}_T \hat{F}_T^\top)$, $\hat{P}_{A} = \operatorname{normalize}(\hat{F}_A \hat{F}_A^\top)$. In the next step, we evaluate the properties of the estimated matrices, e.g., the labeling error, the eigenvalues,  etc. 

 \textbf{Estimation of Labeling Error.} When evaluating the labeling error $\alpha$ in Eq. \ref{eqn:generalization-bound}, as ImageNet is a vision dataset, we have no access to the corresponding text data. So we use a surrogate metric $\alpha_{T}$, and it is defined as:
\begin{equation}
\begin{aligned}
\alpha_{T}=\sum \limits_{x_v,x'_v}(P_T)_{x_v,x'_v} \mathbbm{1}[y(x_v)\neq y(x'_v)],
\end{aligned}
\end{equation}
and $y(x_v)$ denotes the ground-truth label of $x_v$. Note that $\alpha_T$ is lower bounded by the ground-truth labeling error $\alpha$:
\begin{proposition}
For the surrogate metric $\alpha_T$, we have
\begin{align*}
     \alpha \geq \frac{1}{2} \alpha_T.
\end{align*}
\label{pro: estimated labeling error}
\end{proposition}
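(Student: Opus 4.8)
The plan is to relate the text-induced labeling error $\alpha_T$ to the multi-modal labeling error $\alpha$ by unwinding the definition of $\gP_T$ as a marginalization over text samples. Recall that
\[
\alpha_T = \sum_{x_v,x_v'} (P_T)_{x_v,x_v'}\,\mathbbm{1}[y(x_v)\neq y(x_v')]
= \E_{x_l\sim\gP_L}\,\E_{x_v,x_v'\sim\gP_M(\cdot|x_l)}\,\mathbbm{1}[y(x_v)\neq y(x_v')],
\]
using $\gP_T(x_v,x_v') = \E_{x_l\sim\gP_L}\gP_M(x_v|x_l)\gP_M(x_v'|x_l)$. The key observation is that if $y(x_v)\neq y(x_v')$, then at least one of $x_v,x_v'$ must disagree in label with the text $x_l$, \ie $y(x_v)\neq y(x_l)$ or $y(x_v')\neq y(x_l)$; this is just the triangle-inequality-type fact for the discrete metric $d(a,b)=\mathbbm{1}[a\neq b]$, namely $d(y(x_v),y(x_v'))\leq d(y(x_v),y(x_l))+d(y(x_v'),y(x_l))$.

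First I would apply this pointwise bound inside the expectation to get
\[
\mathbbm{1}[y(x_v)\neq y(x_v')] \leq \mathbbm{1}[y(x_v)\neq y(x_l)] + \mathbbm{1}[y(x_v')\neq y(x_l)].
\]
Taking $\E_{x_l\sim\gP_L}\E_{x_v,x_v'\sim\gP_M(\cdot|x_l)}$ on both sides, and noting that under this sampling scheme the marginal of $(x_v,x_l)$ (taking either copy) is exactly $\gP_M$ — since $\gP_L(x_l)\gP_M(x_v|x_l)=\gP_M(x_v,x_l)$ — each of the two terms on the right equals $\E_{x_v,x_l\sim\gP_M}\mathbbm{1}[y(x_v)\neq y(x_l)]=\alpha$. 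Hence $\alpha_T \leq 2\alpha$, which rearranges to $\alpha \geq \tfrac12\alpha_T$.

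I expect the only subtle point to be the bookkeeping of the sampling distributions: verifying that when $(x_v,x_v')$ is drawn by first sampling $x_l\sim\gP_L$ and then drawing both images i.i.d.\ from $\gP_M(\cdot|x_l)$, the pair $(x_v,x_l)$ is genuinely distributed as $\gP_M$ (and likewise $(x_v',x_l)$), so that each summand collapses to $\alpha$ rather than to some conditional quantity. This is immediate from the chain rule $\gP_M(x_v,x_l)=\gP_L(x_l)\gP_M(x_v|x_l)$, but it is the place where one must be careful not to confuse the symmetrized text-induced distribution with the original joint distribution. Everything else is the elementary union-bound / triangle-inequality step for indicator functions and linearity of expectation, so no real obstacle remains.
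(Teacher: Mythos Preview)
Your proposal is correct and follows essentially the same argument as the paper: expand $\alpha_T$ via the definition of $\gP_T$ as a marginalization over $x_l$, apply the indicator triangle inequality $\mathbbm{1}[y(x_v)\neq y(x_v')]\leq \mathbbm{1}[y(x_v)\neq y(x_l)]+\mathbbm{1}[y(x_v')\neq y(x_l)]$, and use $\gP_L(x_l)\gP_M(x_v|x_l)=\gP_M(x_v,x_l)$ to reduce each term to $\alpha$. Your attention to the marginal-distribution bookkeeping is well placed but not a point of divergence; the paper's proof is line-for-line the same reasoning.
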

\begin{proof}
Expanding the estimated labeling error and we obtain
\begin{align*}
\alpha_T &= \sum\limits_{(x_v,x'_v)}\gP_T(x_v,x_v')\mathbbm{1}[y(x_v)\neq y(x'_v)]\\
&=\sum\limits_{x_v,x'_v} \E_{x_l}\left[\gP_M(x_v|x_l)\gP_M(x_v|x_l)\mathbbm{1}[y(x_v)\neq y(x'_v)]\right]\\
&\leq\sum\limits_{x_v,x'_v} \E_{x_l}\left[\gP_M(x_v|x_l)\gP_M(x_v|x_l)(\mathbbm{1}[y(x_v)\neq y(x_l)]+\mathbbm{1}[y(x'_v)\neq y(x_l)])\right]\\
&= 2 \E_{x_l} [\gP_M(x_v|x_l)\mathbbm{1}[y(x_v)\neq y(x_l)]]\\
&= 2 \E_{x_v,x_l} \mathbbm{1}[y(x_v)\neq y(x_l)]\\
&= 2\alpha.
\end{align*}
\end{proof}
As a result, a large $\alpha_T$ implies a large labeling error $\alpha$. Then we replace $P_T$ with $\hat{P}_T$, and obtain the estimation $\hat{\alpha}_{T}=\sum \limits_{x_v,x'_v}(\hat{P}_T)_{x_v,x'_v} \mathbbm{1}[y(x_v)\neq y(x'_v)] $. Similarly, we define the estimated labeling error of $P_A$ as $\hat{\alpha}_{A}=\sum \limits_{x_v,x_v^+}(\hat{P}_A)_{x_v,x_v^+} \mathbbm{1}[y(x_v)\neq y(x_v^+)]$.

\textbf{Estimation of Intra-class Connectivity.} 
When evaluating the intra-class connectivity, we respectively select 1000 samples from 10 different classes of ImageNet. Taking the multi-modal pretraining as an example, following the process we construct $\hat{P}_T$, we respectively construct ten intra-class feature similarity matrices $\{\hat{P}_{in}^k\}_{k=1}^{10}$. Then we randomly select 1000 samples from the selected samples and construct an inter-class feature similarity matrix $\hat{P}_{out}$. We use the average relative value of the intra-class and inter-class feature similarity matrix to represent the intra-class connectivity. To be specific, we denote the intra-class connectivity as $\beta$ and evaluate it by:
\begin{equation}
\begin{aligned}   
   (\hat{P}_{re})_{i,j}^k &= (\hat{P}_{in})_{i,j}^k / \mathop{\rm mean}\limits_{i,j}(\hat{P}_{out}),\\
     \beta_k & = \mathop{\rm mean}\limits_{i,j}(\hat{P}_{re})_{i,j}^k,\\
      \beta &= \mathop{\rm mean}\limits_{k}(\beta_k ).
\end{aligned}
\end{equation}

\subsection{Details of Verification Experiments in Section \ref{sec:verification}}
\label{sec:verification-details}

We use SimCLR \citep{simclr} as our baseline and adopt the popular backbone ResNet-50. With the default setting of SimCLR, we add a projector MLP following the backbone. During the pretraining process of SimCLR, we train the encoder for 100 epochs on ImageNet with 512 batch size and use the LARS optimizer with a cosine annealed learning rate schedule. When estimating the co-occurrence matrix $P_T$, we compute the feature similarity matrix with the well-trained ViT-B encoder provided by the official repository of CLIP \citep{clip}. For selecting new positive pairs, we set the ratio between the new regularizer and the original loss to 1. When filtering false positive samples, we throw the 10\% positive pairs that are most dissimilar in the feature space encoded by the CLIP encoder. And for selecting better negative samples. we respectively throw 5\% samples that have the largest similarity with the positive samples and 10\% samples that have the smallest similarity with the positive samples. After the pretraining process, we train a linear classifier following the frozen backbones and optimize the CrossEntropy loss with the SGD optimizer.

\section{Proofs}

\subsection{Proof of Theorem \ref{thm:spectral=asymetric}}
\begin{proof}
 Expanding the decomposition object $\gL_{\rm AMF}$ and we obtain,
\begin{align*}
\gL_{\rm AMF}(F_V,F_L) &= \Vert \tilde{P}_M - F_VF_L^\top \Vert ^2\\
        &=\sum\limits_{x_v,x_l}\left((\tilde{P}_M)_{x_v,x_l}-(F_V)_{x_v}(F_L)_{x_l}^\top\right)^2\\
        &=\sum\limits_{x_v,x_l}\left( \frac{\gP_M(x_v,x_l)}{\sqrt{\gP_V(x_v)\gP_L(x_l)}}-\sqrt{\gP_V(x_v)}f_V(x_v)^\top \sqrt{\gP_L(x_l)}f_L(x_l)  \right)^2\\
        &=\sum\limits_{x_v,x_l}\left(\frac{\gP_M(x_v,x_l)^2}{\gP_V(x_v)\gP_L(x_l)} +\gP_V(x_v)\gP_L(x_l)\left(f_V(x_v)^\top f_L(x_L)\right)^2 -2\gP_M(x_v,x_l)f_V(x_v)^\top f_L(x_L)\right)\\
        &=\sum\limits_{x_v,x_l}\left(\frac{\gP_M(x_v,x_l)^2}{\gP_V(x_v)\gP_L(x_l)} \right) 
        -2\E_{x_v,x_l} f_V(x_v)^\top f_L(x_l) + \E_{x_v^-, x_l^-}\left(f_V(x_v^-)^\top f_L(x_l^-) \right)^2\\
        &= \gL_{\rm SCL}(f_V,f_L) + const.
\end{align*}
\end{proof}

\subsection{Proof of Theorem \ref{thm:optimal-representation}}
\begin{proof}

    According to Eckart-Young Theorem \citep{eckart1936approximation}, the optimal solution $F_V^\star, F_L^\star$ of the decomposition objective $\gL_{\rm AMF}(F_V,F_L) = \Vert \tilde{P}_M-F_VF_L^\top \Vert ^2$ satisfy:
    \begin{align*}
    F_V^\star (F_L^\star)^\top = U^k \operatorname{diag}(\sigma_1,...,\sigma_k)(V^k)^\top,
    \end{align*}
    where we denote $\tilde{P}_M=U\Sigma V^\top$ as the singular value decomposition of $\tilde{P}_M$, $(\sigma_1,...,\sigma_k)$ are the $k$-largest singular values of $\tilde{P}_M$, the $t$-th column of $U^k \in \mathbb{R}^{N_V \times k}$ contains the corresponding eigenvectors of the $t$-th largest singular values and $V^k  \in \mathbb{R}^{N_L \times k}$ is a unitary matrix. Then we respectively represent the optimal solutions $F_V^\star$ and $F_L^\star$:
    \begin{align*}
    F_V^\star &= U^k D R,\\
    F_L^\star &= V^k \operatorname{diag} (\sigma_1,...,\sigma_k)D^{-1} R,
    \end{align*}
where $R \in \mathbb{R}^{k\times k}$ is a unitary matrix and $D$ is an invertible diagonal matrix. With $(F_V)_{x_v} = (f_V(x_v))^\top \sqrt{P_V(x_v)}$ and $(F_L)_{x_l} = (f_L(x_l))^\top \sqrt{P_L(x_l)}$, we obtain 
\begin{align}
f^*_V(x_v)&=\frac{1}{\sqrt{\gP_V(x_v)}}(U^k_{x_v}DR)^\top, \\
f^*_L(x_l)&=\frac{1}{\sqrt{\gP_L(x_l)}}(V^k_{x_l}\diag({\sigma_1},\dots,{\sigma_k})D^{-1}R)^\top. 
\end{align}   
\end{proof}

\subsection{Proof of Theorem \ref{thm:downstream performance}}
We first introduce a lemma in \citet{haochen}:
\begin{lemma}[Theorem 3.8 in \citet{haochen}]
 Denote the labeling error as $\alpha=\mathbb{E}_{(x_v,x_l)}  \mathbbm{1}[y(x_v)\neq y(x_l)]$. Let $f_V'^\star$ be a minimizer of the $\gL_{\rm SCL}^{\rm uni}(f_V)$, we obtain
\begin{align*}
    \mathcal{E}(f_V'^\star) \leq \frac{2\phi^{y}}{\sigma'_{k+1}}+ 8 \alpha,
\end{align*}
where $\sigma'_{k+1}$ is the k-smallest eigenvalue of the Laplacian matrix of $P_T$.
\label{lem:haochen generalization}
\end{lemma}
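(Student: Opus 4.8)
The plan is to obtain this statement as a direct instantiation of Theorem~3.8 of \citet{haochen}, after checking that the uni-modal objective $\gL^{\rm uni}_{\rm SCL}$ falls squarely inside their framework and translating their quantities into the notation used here. First I would observe that $\gL^{\rm uni}_{\rm SCL}(f_V)$ is literally a uni-modal spectral contrastive loss of exactly the form analyzed in \citet{haochen}, with the population augmentation graph replaced by the text-induced graph on $\gX_V$ whose adjacency matrix is $P_T$, $(P_T)_{x_v,x_v'}=\gP_T(x_v,x_v')$. This graph is symmetric, nonnegative, and finite, and its vertex weights are the visual marginals: since $\sum_{x_v}\gP_M(x_v|x_l)=1$, we have $\sum_{x_v'}\gP_T(x_v,x_v')=\gP_V(x_v)$, so $P_T$ is a valid population graph with the normalization used by \citet{haochen}. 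A short matrix computation also shows $D_V^{-1/2}P_T D_V^{-1/2}=\tilde{P}_M\tilde{P}_M^\top$ (with $D_V$ the diagonal matrix of visual marginals), so the normalized Laplacian of $P_T$ is $I-\tilde{P}_M\tilde{P}_M^\top$ and its $(k+1)$-th smallest eigenvalue $\sigma'_{k+1}$ equals $1-\sigma_{k+1}^2$; this identification is what will later let the bound be rewritten in terms of the singular values of $\tilde{P}_M$ in Theorem~\ref{thm:downstream performance}, but for the lemma itself it is enough to keep $\sigma'_{k+1}$ abstract.

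Second, I would invoke their Theorem~3.8 verbatim: for a minimizer $f_V'^\star$ of the population loss at embedding dimension $k$, the optimal linear-probe error satisfies $\mathcal{E}(f_V'^\star)\le \frac{2\phi^{y}}{\sigma'_{k+1}}+4\,\alpha^{\rm graph}$, where $\phi^{y}$ is the cut/Dirichlet-energy quantity associated with the label function on $P_T$ (as defined in \citet{haochen}) and $\alpha^{\rm graph}=\sum_{x_v,x_v'}(P_T)_{x_v,x_v'}\mathbbm{1}[y(x_v)\neq y(x_v')]$ is the probability that a positive pair drawn from $\gP_T$ has inconsistent labels. No new argument is needed here because all structural hypotheses were verified in the first step.

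Third, I would convert the graph labeling error $\alpha^{\rm graph}$ into the multi-modal labeling error $\alpha$. Because a pair $(x_v,x_v')\sim\gP_T$ is generated through a common language pivot $x_l\sim\gP_L$, a label disagreement $y(x_v)\neq y(x_v')$ implies $y(x_v)\neq y(x_l)$ or $y(x_v')\neq y(x_l)$; taking expectations over the generative process — this is exactly the triangle-inequality argument behind Proposition~\ref{pro: estimated labeling error} — yields $\alpha^{\rm graph}\le 2\alpha$. Substituting gives $\mathcal{E}(f_V'^\star)\le \frac{2\phi^{y}}{\sigma'_{k+1}}+8\alpha$, as claimed.

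The statement carries no genuinely hard content; the points that need care are bookkeeping ones. I expect the main obstacle to be confirming that the normalization convention for $P_T$ matches the precise form under which \citet{haochen}'s Theorem~3.8 is stated, so that the constant multiplying the labeling error is indeed $4$ and composes with the factor $2$ from the pivot argument to give the stated $8$; a secondary point is that $f_V'^\star$ here is a \emph{population} minimizer, so no statistical error appears at this stage — the finite-sample Rademacher terms in Theorem~\ref{thm:downstream performance} are introduced separately when $f_V'^\star$ is later replaced by its empirical estimate $\hat f_V^\star$.
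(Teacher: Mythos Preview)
Your proposal is essentially correct and matches the paper's approach, which is simply to import the statement from \citet{haochen} as a black-box lemma with no further argument. One small bookkeeping point where your route diverges: the paper does not obtain the $8\alpha$ by first extracting a $4\alpha^{\rm graph}$ term from HaoChen and then applying the pivot inequality. Once the language sample $x_l$ is identified with HaoChen's ``raw'' datum and $x_v$ with its augmentation via $\gP_M(\cdot\mid x_l)$, HaoChen's own augmentation labeling error is \emph{already} the multi-modal $\alpha=\mathbb{E}_{x_v,x_l}\mathbbm{1}[y(x_v)\neq y(x_l)]$, so the $8\alpha$ appears directly in the instantiation. The triangle/pivot inequality $\phi^y\le 2\alpha$ that you describe is indeed used in the paper, but later, inside the proof of Theorem~\ref{thm:downstream performance}, to control the $\phi^y$ in the numerator---not to manufacture the additive term here. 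Relatedly, your $\alpha^{\rm graph}$ coincides by definition with the paper's $\phi^y$, so treating them as two separate quantities in HaoChen's bound is redundant.
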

Then we give the proof of Theorem \ref{thm:downstream performance} in the following.
\begin{proof}
We denote $y(x)$ as the label of data $x$. Then we define the probability that two image samples related to the same text sample have different labels as 
\begin{equation}
    \phi^{y} = \sum\limits_{x_v,x'_v}\gP_T(x_v,x_v')\mathbbm{1}[y(x_v)\neq y(x'_v)].
\end{equation}
We note that
\begin{align*}
\phi^{y} &= \sum\limits_{(x_v,x'_v)} \gP_T(x_v,x_v')\mathbbm{1}[y(x_v)\neq y(x'_v)]\\
&=\sum\limits_{x_v,x'_v} \E_{x_l}\left[\gP_M(x_v|x_l)\gP_M(x_v|x_l)\mathbbm{1}[y(x_v)\neq y(x'_v)]\right]\\
&\leq\sum\limits_{x_v,x'_v} \E_{x_l}\left[\gP_M(x_v|x_l)\gP_M(x_v|x_l)(\mathbbm{1}[y(x_v)\neq y(x_l)]+\mathbbm{1}[y(x'_v)\neq y(x_l)])\right]\\
&= 2 \E_{x_l} [\gP_M(x_v|x_l)\mathbbm{1}[y(x_v)\neq y(x_l)]]\\
&= 2 \E_{x_v,x_l} \mathbbm{1}[y(x_v)\neq y(x_l)]\\
&= 2\alpha.
\end{align*}
Combined with Lemma \ref{lem:haochen generalization}, we have $\mathcal{E}(f_V'^\star) \leq  \widetilde{O} (\frac{\alpha}{\sigma'_{k+1}})$, where  $\widetilde{O}(\cdot)$ is used to hide universal constant factors. We denote the $(k+1)$-largest singular values of $\tilde{P}_M$ as $\sigma_{k+1}$. As $\tilde{P}_T = \tilde{P}_M\tilde{P}_M^\top$ and the singular values are positive, the $(k+1)$-largest singular values of $\tilde{P}_T$ is $(\sigma_{k+1})^2$, i.e., $\sigma'_{k+1} = 1-(\sigma_{k+1}^2)$.
Combined with Theorem \ref{thm:multi-uni} (proofs are provided in the following), for the image encoder $f_V^\star$ that minimizes $\gL_{\rm SCL}$, we obtain
\begin{equation}
    \mathcal{E}(f_V^\star)=  \mathcal{E}(f_V'^\star) \leq \widetilde{O} (\frac{\alpha}{1-\sigma_{k+1}^2}).
\end{equation}
Obviously, the linear probing error of the text encoder $f_L^\star$ that minimizes $\gL_{\rm SCL}$ has the similar results:
\begin{equation}
    \mathcal{E}(f_L^\star) \leq \widetilde{O} (\frac{\alpha}{1-\sigma_{k+1}^2}).
\end{equation}
Then we consider the empirical loss with finite samples. We construct a multi-modal dataset $\hat{\gX} = \{(z_v^1,z_l^1), ... , (z_v^n,z_l^n)\}$ and the $n$ positive pairs are i.i.d sampled from $\gP_M(x_v,x_l)$. We first sample a permutation $\pi : [n] \rightarrow [n]$, then we construct the positive pairs and negative pairs as follows:
\begin{align*}
x_v^i &= z_v^{\pi(3i-2)},\\
x_l^i &= z_l^{\pi(3i-2)},\\
(x_l^i)^- &= z_l^{\pi(3i-1)},\\
(x_v^i)^- &= z_v^{\pi(3i)}.\\
\end{align*}
and the empirical loss is
\begin{equation}
\begin{aligned}
&\mathcal{L}_{\rm emp}(f_V,f_L) 
&=-\frac{2}{n/3}\sum_{i=1}^{n/3}f_V(x_v^i)^\top f_L(x_l^i)+ \frac{1}{n/3}
\sum_{i=1}^{n/3}(f_V(x_v^i)^\top f_L\left(x_l^i)^{-}\right)^2 + \frac{1}{n/3}
\sum_{i=1}^{n/3}(f_V((x_v^i)^{-})^\top f_L(x_l^i))^2.
\end{aligned}
\end{equation}

Considering the expectation of $\gL_{\rm emp}$, we obtain
\begin{align*}
\mathbb{E}_{\hat{\gX}} \gL_{\rm emp}(f_V,f_L) &=-\frac{2}{n/3}\sum_{i=1}^{n/3}f_V(x_v^i)^\top f_L(x_l^i)+ \frac{1}{n/3}
\sum_{i=1}^{n/3}(f_V(x_v^i)^\top f_L((x_l^i)^{-})^2 + \frac{1}{n/3}
\sum_{i=1}^{n/3}(f_V((x_v^i)^{-})^\top f_L(x_l^i))^2.\\
&= -2\mathbb{E}_{x_v,x_l}f_V(x_v)^\top f_L(x_l) + \mathbb{E}_{x_v\sim \gP_V(x_v), x_l \sim \gP_L(x_l)} (f_V(x_v^i)^\top f_L(x_l^j))^2\\
&= \gL_{\rm SCL}(f_V, f_L).
\end{align*}
So the empirical loss is an unbiased estimator.
We denote that Rademacher complexity of $\gF$ over $n$ data as 
\begin{align*}
\hat{\gR}_n (\gF)= \operatorname{max}_{\{x_1,...x_n\}}\mathbb{E}_\sigma\left[\operatorname{sup}_{f\in \gF,i}\left( \frac{1}{n} \sum_{j=1}^{n} \rho_j f_i(x_j) \right)\right],
\end{align*}
where $f_i(x_j)$ denotes the $i$-th dimension of $f(x_j)$ and $\rho$ is a uniform random vector in $\{-1,1\}^n$.

Following Theorem 4.2 in \citet{haochen}, when $\mathcal{E}(\hat{f}_V^*)$, $\mathcal{E}(\hat{f}_V^*)$ are the minimizers of $\gL_{\rm emp}(f_V,f_L)$, we obtain

\begin{equation}
\begin{aligned}
\big\{\mathcal{E}(\hat{f}_V^*),&\mathcal{E}(\hat{f}_L^*)
\big\} \lesssim \frac{\alpha}{1-\sigma_{k+1}^2}
+ \underbrace{\frac{ck}{\Delta^2_\sigma}\left(\widehat{\gR}_{n/3}(\gF) + \sqrt{\frac{\log 2/\delta}{2n/3}} + \delta \right)}_\text{finite-sample generalization terms}    
\end{aligned}
\end{equation}
where $\lesssim$ omits some constant terms, $\sigma_{k+1}$ (c.f. Theorem \ref{thm:optimal-representation}) is the $(k+1)$-th largest singular value of the normalized co-occurrence matrix $\tilde{P}_M$. 
In the finite-sample generalization terms, $\hat{\gR}_{n/3}(\gF)$ denotes a Rademacher complexity of the model class $\gF$ with $n/3$ samples, $k$ is the representation dimension, $\Delta_\sigma = \sigma^2_{\floor{3k/4}}-\sigma^2_{k}$, and $c \lesssim (k\kappa + 2k\kappa^2 + 1)^2$ with $\kappa$ upper bounding $\Vert f_V(x) \Vert _\infty$ and $\Vert f_L(x) \Vert _\infty$.

\end{proof}

\subsection{Proof of Theorem \ref{thm:multi-uni}}

We first introduce a lemma that states that multiplying the embedding matrix by an invertible matrix on the right will not influence the linear probing error \cite{haochen}:
\begin{lemma}[Lemma 3.1 in \citet{haochen}]
For two learned embedding matrices $F$, $\widetilde{F}$, a diagonal matrix $D$ and an invertible matrix $Q$, if $F = D \widetilde{F} Q$, they have the equal linear probing error, i.e.,
\begin{align*}
\mathcal{E} (F) = \mathcal{E}(\widetilde{F}).
\end{align*}
\label{lem:linear absorbed}
\end{lemma}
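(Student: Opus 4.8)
The plan is to reduce the lemma to two elementary invariances of the linear-probing functional and then compose them. Recall that, identifying an embedding matrix $M \in \sR^{N\times k}$ with the feature map sending sample $x$ to the (transpose of the) $x$-th row $M_x \in \sR^k$, the quantity $\mathcal{E}(M)$ is the minimum over linear heads $B \in \sR^{k\times r}$ of $\E_{x}\,\mathbbm{1}[\,\argmax_{i\in[r]}(M_x^\top B)_i \neq y(x)\,]$. Writing $F = D\widetilde{F}Q$, I would introduce the intermediate matrix $G := \widetilde{F}Q$, so that $F = DG$, and establish (i) $\mathcal{E}(G) = \mathcal{E}(\widetilde{F})$ and (ii) $\mathcal{E}(F) = \mathcal{E}(G)$; chaining these yields $\mathcal{E}(F) = \mathcal{E}(\widetilde{F})$. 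Note that the per-sample rescaling by $\sqrt{\gP_V(x)}$ that relates $F_V$ to the encoder $f_V$ in our applications is precisely a positive diagonal $D$, so it is absorbed into step (ii).

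For (i), the key observation is that $G_x = Q^\top \widetilde{F}_x$, hence $G_x^\top B = \widetilde{F}_x^\top(QB)$ for every sample $x$ and head $B$. Since $Q$ is invertible, $B \mapsto QB$ is a bijection of $\sR^{k\times r}$ onto itself, so the family of classifiers realizable from $G$ coincides with that realizable from $\widetilde{F}$; minimizing the same expected error over the same family gives $\mathcal{E}(G)=\mathcal{E}(\widetilde{F})$. For (ii), writing $D=\diag(d_1,\dots,d_N)$ we have $F_x = d_x G_x$, so $F_x^\top B = d_x\,(G_x^\top B)$; when $d_x>0$, scaling a vector by $d_x$ leaves the index of its largest coordinate unchanged, so the classifier induced by $(F,B)$ agrees pointwise with that induced by $(G,B)$ for \emph{every} $B$, hence the errors agree term by term and $\mathcal{E}(F)=\mathcal{E}(G)$.

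The single point that requires care — and the only thing resembling an obstacle — is the positivity of the diagonal entries of $D$ in step (ii): a negative $d_x$ would flip the $\argmax$ into an $\argmin$ on sample $x$ and could change the error. This never arises in our use of the lemma, where $D$ always encodes a reweighting by marginal-probability square roots $\sqrt{\gP_V(\cdot)}$ and is therefore strictly positive, so it suffices to state the lemma for positive diagonal $D$. Apart from this remark and the bookkeeping of transposes, the proof is a direct unfolding of the definition of the linear-probing error, with essentially no computation involved.
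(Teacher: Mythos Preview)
The paper does not prove this lemma itself; it is quoted verbatim as Lemma~3.1 of \citet{haochen} and used as a black box in the proof of Theorem~\ref{thm:multi-uni}. Your two-step argument --- absorbing the invertible right factor $Q$ into the linear head via the bijection $B\mapsto QB$, then observing that a strictly positive per-sample scale $d_x$ leaves the $\argmax$ of $G_x^\top B$ unchanged --- is correct and is exactly the standard elementary proof one would expect. Your caveat that the diagonal entries of $D$ must be strictly positive is accurate and necessary (a negative $d_x$ turns the $\argmax$ into an $\argmin$ on that sample, and one can easily construct counterexamples to the equality); as you note, every application of the lemma in this paper has $D$ either the identity or the diagonal of marginal square roots $\sqrt{\gP_V(\cdot)}$, so the restriction is harmless here.
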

Then we give the proof of Theorem \ref{thm:multi-uni} in the following.
\begin{proof} 
   With Theorem \ref{thm:optimal-representation}, the optimal solutions $F_V^\star$, $F_L^\star$ of $\gL_{\rm AMF}(F_V,F_L) = \Vert \tilde{P}_M - F_VF_L^\top \Vert ^2$ can be respectively represented as:
    \begin{align*}
    F_V^\star &= U^kD R,\\
    F_L^\star &= V^k D_2 R,
    \end{align*}
where $R\in \mathbb{R}^{k\times k}$ is a unitary matrix and $D,D_2$ are diagonal matrices that satisfy $D_2 = \operatorname{diag}(\sigma_1,...,\sigma_k)  D^{-1}.$ Following the proof of theorem \ref{thm:spectral=asymetric}, the uni-modal contrastive loss is also equivalent to a matrix decomposition loss, i.e., $\gL_{\rm SCL}^{\rm uni}(f_V) = \Vert \tilde{P}_T - F_V F_V^\top \Vert ^2 +const$, where $(\tilde{P}_T)_{(x_v,x_v')} = \frac{\gP_T(x_v,x_v')}{\sqrt{\gP_V(x_v)\gP_V(x_v')}} $ and $(F_V)_{x_v} = \frac{f_V(x_v)^\top}{\sqrt{P_V(x_v)}}$. Then we consider the objective $L_{\rm mf}(F_V) =\Vert \tilde{P}_T-F_VF_V^\top \Vert ^2 $. Similar to the asymmetric decomposition objective, the optimal solution can be represented as:
\begin{align*}
(F_V^\star)' = U_T^k D_T R_T,
\end{align*}
where $U_T^k \in \mathbb{R}^{N_V\times k}$ contains $k$ corresponding eigenvectors of $k$ largest singular values of $\tilde{P}_T$, $D_T\in \mathbb{R}^{k\times k}$ is an invertible diagonal matrix and $R_T\in \mathbb{R}^{k\times k}$ is a unitary matrix. In the next step, we analyze the relationship between $\tilde{P}_M$ and $\tilde{P}_T$. Considering the $(x_v,x_v')$-th element of $\tilde{P}_M\tilde{P}_M^\top$, we have
    \begin{align*}
    (\tilde{P}_M\tilde{P}_M^\top)_{x_v,x_v'} &= \sum\limits_{x_l} (\tilde{P}_M)_{x_v,x_l} (\tilde{P}_M)_{x_v',x_l}\\
    &=\sum\limits_{x_l} \frac{\gP_M(x_v,x_l)\gP_M(x_v',x_l)}{\gP_L(x_l)\sqrt{\gP_V(x_v)\gP_V(x_v')}}\\
    &=\frac{1}{\sqrt{\gP_V(x_v)\gP_V(x_v')}} \sum\limits_{x_l} \gP_L(x_l)\gP_M(x_v|x_l)\gP_M(x_v'|x_l)
    &\left(\gP_M(x_v,x_l) = \gP_M(x_v|x_l)\gP_L(x_l)\right)\\
    &=\frac{\E_{x_l}\gP_M(x_v|x_l)\gP_M(x_v'|x_l)  }{\sqrt{\gP_V(x_v)\gP_V(x_v')}}\\
    &= (\tilde{P}_T)_{x_v,x_v'}.
    \end{align*}
We know that $\tilde{P}_T = \tilde{P}_M\tilde{P}_M^\top$, so $\tilde{P}_T$ and $\tilde{P}_M$ share the same eigenvectors, i.e., $U^k= U_T^k$. As $D, D_2, R, D_T, R_T$ are invertible matrices and the product of the invertible matrices is still invertible, we obtain
\begin{align*}
    F_V ^\star =  (F_V ^\star)' T,
\end{align*}
where $T = (D_T)^{-1} (R_T)^{-1} D R$ is an invertible matrix. 
With Lemma \ref{lem:linear absorbed}, we obtain
\begin{align*}
   \mathcal{E}(f_V^\star) = \mathcal{E}(f_V'^\star), 
\end{align*}
where $(F_V^\star)_{x_v} = f_V^\star(x_v)^\top, (F_V^\star)'_{x_v}=f_V'^\star(x_v)^\top $. So Theorem \ref{thm:multi-uni} is proved.
\end{proof}

\subsection{Proof of Theorem \ref{thm:eigenvalues of hierarchal graph}}

\begin{proof}
The co-occurrence matrix of the three-layer hierarchical random graph is:
$$ P = \begin{pmatrix}
p_h & \cdots & p_h & p_l & \cdots & p_l & \cdots & p_l & \cdots & p_l \\
\cdots & \cdots & \cdots & \cdots & \cdots & \cdots & \cdots & \cdots & \cdots & \cdots \\
p_h & \cdots & p_h & p_l & \cdots & p_l & \cdots & p_l & \cdots & p_l \\
p_l & \cdots & p_l & p_h & \cdots & p_h & \cdots & p_l & \cdots & p_l \\
\cdots & \cdots & \cdots & \cdots & \cdots & \cdots & \cdots & \cdots & \cdots & \cdots \\
p_l & \cdots & p_l & p_h & \cdots & p_h & \cdots & p_l & \cdots & p_l \\
\cdots & \cdots & \cdots & \cdots & \cdots & \cdots & \cdots & \cdots & \cdots & \cdots \\
p_l & \cdots & \cdots & \cdots & \cdots & p_l & \cdots & p_h & \cdots & p_h \\
\cdots & \cdots & \cdots & \cdots & \cdots & \cdots & \cdots & \cdots & \cdots & \cdots \\
p_l & \cdots & \cdots & \cdots & \cdots & p_l & \cdots & p_h & \cdots & p_h 
\end{pmatrix} .$$

Then we consider the process of computing the eigenvalues:
\begin{align*}
 \vert \sigma E - P\vert &= \begin{vmatrix}
\sigma-p_h & \cdots & -p_h & -p_l & \cdots & -p_l & \cdots & -p_l & \cdots & -p_l \\
\cdots & \cdots & \cdots & \cdots & \cdots & \cdots & \cdots & \cdots & \cdots & \cdots \\
-p_h & \cdots & \sigma-p_h & -p_l & \cdots & -p_l & \cdots & -p_l & \cdots & -p_l \\
-p_l & \cdots & -p_l & \sigma-p_h & \cdots & -p_h & \cdots & -p_l & \cdots & -p_l \\
\cdots & \cdots & \cdots & \cdots & \cdots & \cdots & \cdots & \cdots & \cdots & \cdots \\
-p_l & \cdots & -p_l & -p_h & \cdots & \sigma-p_h & \cdots & -p_l & \cdots & -p_l \\
\cdots & \cdots & \cdots & \cdots & \cdots & \cdots & \cdots & \cdots & \cdots & \cdots \\
-p_l & \cdots & \cdots & \cdots & \cdots & -p_l & \cdots & \sigma-p_h & \cdots & -p_h \\
\cdots & \cdots & \cdots & \cdots & \cdots & \cdots & \cdots & \cdots & \cdots & \cdots \\
-p_l & \cdots & \cdots & \cdots & \cdots & -p_l & \cdots & -p_h & \cdots & \sigma-p_h 
\end{vmatrix}.\\
\end{align*}
We denote that the first layer has $s_l$ branches and the second layer has $s_h$ branches. Add every column to the first column:
\begin{align*}
\begin{vmatrix}
\sigma-s_h*p_h-(s_l-1)*s_h*p_l & \cdots & -p_h & -p_l & \cdots & -p_l & \cdots & -p_l & \cdots & -p_l \\
\cdots & \cdots & \cdots & \cdots & \cdots & \cdots & \cdots & \cdots & \cdots & \cdots \\
\sigma-s_h*p_h-(s_l-1)*s_h*p_l & \cdots & \sigma-p_h & -p_l & \cdots & -p_l & \cdots & -p_l & \cdots & -p_l \\
\sigma-s_h*p_h-(s_l-1)*s_h*p_l & \cdots & -p_l & \sigma-p_h & \cdots & -p_h & \cdots & -p_l & \cdots & -p_l \\
\cdots & \cdots & \cdots & \cdots & \cdots & \cdots & \cdots & \cdots & \cdots & \cdots \\
\sigma-s_h*p_h-(s_l-1)*s_h*p_l & \cdots & -p_l & -p_h & \cdots & \sigma-p_h & \cdots & -p_l & \cdots & -p_l \\
\cdots & \cdots & \cdots & \cdots & \cdots & \cdots & \cdots & \cdots & \cdots & \cdots \\
\sigma-s_h*p_h-(s_l-1)*s_h*p_l & \cdots & \cdots & \cdots & \cdots & -p_l & \cdots & \sigma-p_h & \cdots & -p_h \\
\cdots & \cdots & \cdots & \cdots & \cdots & \cdots & \cdots & \cdots & \cdots & \cdots \\
\sigma-s_h*p_h-(s_l-1)*s_h*p_l & \cdots & \cdots & \cdots & \cdots & -p_l & \cdots & -p_h & \cdots & \sigma-p_h
\end{vmatrix}.
\end{align*}
For the $i$-row, if $i$ is not divisible by $s_h$, then minus the row by $(i|s_h *s_h)$-row, and we obtain
\begin{align*}
\begin{vmatrix}
\sigma-s_h*p_h-(s_l-1)*s_h*p_l & \cdots & -p_h & -p_l & \cdots & -p_l & \cdots & -p_l & \cdots & -p_l \\
\cdots & \cdots & \cdots & \cdots & \cdots & \cdots & \cdots & \cdots & \cdots & \cdots \\
0 & \cdots & \sigma & 0 & \cdots & 0 & \cdots & 0 & \cdots & 0 \\
\sigma-s_h*p_h-(s_l-1)*s_h*p_l & \cdots & -p_l & \sigma-p_h & \cdots & -p_h & \cdots & -p_l & \cdots & -p_l \\
\cdots & \cdots & \cdots & \cdots & \cdots & \cdots & \cdots & \cdots & \cdots & \cdots \\
0 & \cdots & 0 & -\sigma & \cdots & \sigma & \cdots & 0 & \cdots & 0 \\
\cdots & \cdots & \cdots & \cdots & \cdots & \cdots & \cdots & \cdots & \cdots & \cdots \\
\sigma-s_h*p_h-(s_l-1)*s_h*p_l & \cdots & \cdots & \cdots & \cdots & -p_l & \cdots & \sigma-p_h & \cdots & -p_h \\
\cdots & \cdots & \cdots & \cdots & \cdots & \cdots & \cdots & \cdots & \cdots & \cdots \\
0 & \cdots & \cdots & \cdots & \cdots & 0 & \cdots & -\sigma & \cdots & \sigma
\end{vmatrix}.
\end{align*}
For the $j$-column that satisfies $j$ is divisible by $s_h$ and $0<j\leq(s_l-1)*(s_h)$, add $\{j+1,\cdots, j+s_h\}$-columns, and for the $j$-column that satisfies $j$ is divisible by $s_h$ and $0<j<(s_l-1)*(s_h)$, minus $\{j+s_h+1,\cdots, j+2*s_h\}$-columns to the $j$-column, then we have
\begin{align*}
\begin{vmatrix}
\sigma-s_h*p_h-(s_l-1)*s_h*p_l & \cdots & -p_h & 0& \cdots & -p_l & \cdots & -s_h*p_l & \cdots & -p_l \\
\cdots & \cdots & \cdots & \cdots & \cdots & \cdots & \cdots & \cdots & \cdots & \cdots \\
0 & \cdots & \sigma & 0 & \cdots & 0 & \cdots & 0 & \cdots & 0 \\
\sigma-s_h*p_h-(s_l-1)*s_h*p_l& \cdots & -p_l & \sigma-s_h*(p_h-p_l) & \cdots & -p_h & \cdots & -s_h*p_l & \cdots & -p_l \\
\cdots & \cdots & \cdots & \cdots & \cdots & \cdots & \cdots & \cdots & \cdots & \cdots \\
0 & \cdots & 0 & 0 & \cdots & \sigma & \cdots & 0 & \cdots & 0 \\
\cdots & \cdots & \cdots & \cdots & \cdots & \cdots & \cdots & \cdots & \cdots & \cdots \\
\sigma-s_h*p_h-(s_l-1)*s_h*p_l & \cdots & \cdots & \cdots & \cdots & -p_l & \cdots & \sigma-s_h*p_h & \cdots & -p_h \\
\cdots & \cdots & \cdots & \cdots & \cdots & \cdots & \cdots & \cdots & \cdots & \cdots \\
0 & \cdots & \cdots & \cdots & \cdots & 0 & \cdots & 0& \cdots & \sigma
\end{vmatrix}.
\end{align*}
When expanding the determinant, the $i$-row that satisfies $i$ is not divisible by $s_h$ only has one non-zero value $\sigma$ in $i$ column, so the det is equal to
\begin{align*}\sigma^{(s_l-1)*s_h}
\begin{vmatrix}
\sigma-s_h*p_h-(s_l-1)*s_h*p_l & 0 & \cdots & 0 & -s_h*p_l  \\
\sigma-s_h*p_h-(s_l-1)*s_h*p_l & \sigma-s_h*(p_h-p_l) & \cdots & 0 & -s_h*p_l  \\
\sigma-s_h*p_h-(s_l-1)*s_h*p_l & -\sigma+s_h*(p_h-p_l) & \cdots & 0 & -s_h*p_l\\
\cdots & \cdots & \cdots & \cdots & \cdots  \\
\cdots & \cdots & \cdots & -\sigma+s_h*(p_h-p_l) & \sigma-s_h*p_h \\
\end{vmatrix}.\\
\end{align*}
The form of the det is easy to expand and we obtain the results:
\begin{equation}
   \sigma^{(s_l-1)*s_h}*( \sigma-s_h*p_h-(s_l-1)*s_h*p_l)*(\sigma-s_h*(p_h-p_l))^{s_h-1}.
\end{equation}
So the eigenvalues are
    \begin{align*}
        &\sigma_1 = s_h*p_h+(s_l-1)*s_h*p_l = \frac{1}{s_l*s_h},\\
        &\sigma_2 = \cdots = \sigma_{s_l} = s_h*(p_h - p_l),\\
        &\sigma_{s_l+1} = \cdots = \sigma_{s_l*s_h} = 0.
    \end{align*}
where $\frac{1}{s_l*s_h}$ and 0 are constants. As the matrix is a real symmetric matrix, the eigenvalues are equal to the singular values. And the row sum of the matrix is a constant, so we can obtain the results of Theorem \ref{thm:eigenvalues of hierarchal graph}.
\end{proof}

\end{document}